\newcommand{\inst}[1]{}
\newtheorem{theorem}{Theorem}
\newtheorem{lemma}{Lemma}
\title{On a spatial-Temporal Decomposition of Optical Flow}
\author{Aniello Raffaele Patrone and Otmar Scherzer}
\date{}
\let\RE\Re
\let\Re=\undefined
\DeclareMathOperator{\Re}{\RE e}
\let\IM\Im
\let\Im=\undefined
\DeclareMathOperator{\Im}{\IM m}
\newcommand{\R}{\mathbbm R}
\newcommand{\N}{\mathbbm N}
\renewcommand{\d}{\mathrm d}
\newcommand{\abs}[1]{\left| #1 \right|}
\newcommand{\set}[1]{\left\{ #1 \right\}}
\newcommand{\norm}[1]{\left\| #1 \right\|}
\newcommand{\inner}[2]{\langle #1, #2 \rangle}
\newcommand{\vect}[1]{\left( \begin{array}{c} #1 \end{array} \right)}
\newcommand{\vx}{\vec{x}}
\newcommand{\vu}{\vec{u}}
\newcommand{\vw}{\vec{w}}
\newcommand{\vi}[1]{\vu^{(#1)}}
\newcommand{\vie}[1]{u^{(#1)}}
\newcommand{\Vi}[1]{\widehat{u}^{(#1)}}
\newcommand{\hi}[1]{\vec{h}^{(#1)}}
\newcommand{\vci}[2]{u_{#2}^{(#1)}}
\newcommand{\hci}[2]{h_{#2}^{(#1)}}
\newcommand{\vgci}[2]{\widehat{\widehat{u}}_{#2}^{(#1)}\!\!}
\newcommand{\vgcii}[2]{\widehat{u}_{#2}^{(#1)}\!}
\newcommand{\Reg}[1]{\mathcal{R}^{(#1)}}
\theoremstyle{definition} 
\newtheorem{example}{Example}
\begin{document}
\maketitle

\centerline{\scshape Aniello Raffaele Patrone$^*$}
\medskip
{\footnotesize
 \centerline{Computational Science Center}
   \centerline{University of Vienna}
   \centerline{ Oskar-Morgenstern Platz 1, 1090 Vienna, Austria}} 

\medskip

\centerline{\scshape Otmar Scherzer}
\medskip
{\footnotesize
 \centerline{ Computational Science Center}
   \centerline{University of Vienna}
   \centerline{Oskar-Morgenstern Platz 1, 1090 Vienna, Austria}
\centerline{and}
\centerline{Johann Radon Institute for Computational and Applied Mathematics }\centerline{(RICAM)}
\centerline{Altenbergerstra{\ss}e 69, 4040 Linz, Austria}
}

\bigskip


\begin{abstract}
In this paper we present a decomposition algorithm for computation 
of the spatial-temporal optical flow of a dynamic image sequence. We consider several applications, such as the 
extraction of temporal motion features and motion detection in dynamic 
sequences under varying illumination conditions, such as they appear for instance in psychological 
flickering experiments. 
For the numerical implementation we are solving an {\bf integro-differential} equation by a fixed 
point iteration. For comparison purposes we use a standard time dependent optical flow algorithm, 
which in contrast to our method, constitutes in solving a spatial-temporal {\bf differential} equation.
\end{abstract}

\section{Introduction}
Analyzing the motion in a dynamic sequence is of interest in many fields of applications, 
like human computer interaction, medical imaging, psychology, to mention but a few.
In this paper we study the extraction of motion in dynamic sequences by means of the optical flow, which 
is the apparent motion of objects, surfaces, and edges in a dynamic visual scene caused by the relative 
motion between an observer and the scene.
There have been proposed several computational approaches for optical 
flow computations in the literature. In this paper we emphasize on variational methods. 
In this research area the first method is due to Horn \& Schunck \cite{HorSchu81}. 
Like many alternatives and generalizations, the Horn \& Schunck method calculates the flow 
from {\bf two consecutive} frames.
Here, we are calculating the optical flow from {\bf all} available frames simultaneously. 
Spatial-temporal optical flow methods were previously studied by Weickert \& Schn\"orr
\cite{WeiSchn01a,WeiSchn01b}, \cite{BorItoKun03}, \cite{WanFanWan08} 
and \cite{AndSchZul15}, to name but a few. However, in contrast to these paper we emphasize 
on a {\bf decomposition} of the optical flow into components, instead of calculating the flow itself. 

Variational modeling of patterns in {\bf stationary} images has been initialized with the seminal book of Y.~Meyer \cite{Mey01}. 
In the context of total variation regularization, reconstructions of patterns was studied first in \cite{VesOsh03}. 
Here, we are implementing similar ideas as have been used before for variational image denoising 
\cite{AubAuj05,AujCha05,AujKan06,AujAubBlaCha05,AujGilChaOsh06,DuvAujVes10,VesOsh04} and 
optical flow decomposition \cite{AbhBelSch09,KohMemSchn03,YuaSchnSte07,YuaSteSchn08,YuaSchnSte09}. 
However, a conceptual difference is that we aim for extracting {\bf temporal} features, while, 
in all the cited papers the decomposition was for finding spatial components of the flow. 
We emphasize that the proposed method is one of very few variational optical flow algorithms in a 
space-time regime and within this class, this algorithm is the only spatial-temporal {\bf decomposition} algorithm. 

The outline of this paper is as follows:
In Section \ref{sec:registration} we review the optical flow equation. In Section \ref{sec:anEx} we present analytical examples of the 
optical flow equation in case of illumination changes. In Section \ref{section:oflow} we introduce the 
new model for spatial-temporal optical flow decomposition. We formulate it as a minimization problem 
and derive the optimality conditions for a minimizer. 
In Section \ref{sec:implementation} we derive a fixed point algorithm for 
numerical minimization of the energy functional. 
Finally in Section \ref{sec:experiments} and Section \ref{sec:conclusion} we present
experiments, results and a discussion of them.

\section{Registration and optical flow}
\label{sec:registration}
The problem of aligning dynamic sequences $f(\cdot,t)$, $t \in (0,1)$ can be formulated as the operator equation for 
finding a flow $\Psi$ of diffeomorphisms, 
\begin{equation*}
\Psi(\cdot,t) : \Omega \to \Omega, \quad \forall\, t \in (0,1),
\end{equation*}
such that 
\begin{equation}\label{eq:registration}
 f(\Psi(\vx,t),t) = f(\vx,0),\qquad \forall\, \vx \in \Omega \text{ and } t \in (0,1) .
\end{equation}
For the sake of simplicity of presentation we consider the time interval as the unit interval all along the paper.

Differentiation of \eqref{eq:registration} with respect to $t$ for a fixed $\vx$ gives
\begin{equation}\label{eq:registration_oflow}
\nabla f (\Psi(\vx,t),t) \cdot \partial_t \Psi(\vx,t) + \partial_t f(\Psi(\vx,t),t)= 0,\qquad \forall \,
\vx \in \Omega \text{ and }  t \in (0,1) .
\end{equation}

Switching from a Lagrangian to an Eulerian description we obtain the 
{\bf optical flow equation (OFE)} on $\Omega$:
\begin{equation}\label{eq:OFE}
\nabla f(\vx,t) \cdot \vu(\vx,t) + \partial_t f(\vx,t) =0,\qquad \forall\, \vx \in \Omega \text{ and } t \in (0,1).
\end{equation}
Although the derivation is based on a constant brightness assumption along characteristics, mathematically, 
\eqref{eq:OFE} even makes sense under varying illumination conditions. However, as we show in simple examples below, 
standard regularity assumptions on the optical flow are violated when the characteristics degenerate (collapse or originate) or 
when the illumination changes.
Instead of solving \eqref{eq:OFE} usually the relaxed problem is considered, which consists in minimizing the functional 
\begin{equation}\label{OFE_rel}
\text{argmin } S(\vu):=\int_\Omega (\nabla f(\vx,t) \cdot \vu(\vx,t) + \partial_t f(\vx,t))^2\,\d\vx\,,\qquad \forall t \in (0,1),
\end{equation}
subject to appropriate constraints.

\section{The optical flow equation in case of illumination changes}
\label{sec:anEx}

In this section we are providing simple motivating examples explaining properties of the solution of the 
optical flow equation \eqref{eq:OFE} under changing illumination conditions. Here we are restricting attention 
to a spatial domain $\emptyset \neq \Omega=(0,1) \subset \R$.

The following two examples simulate a day to night illumination situation. The optical flow is calculated 
analytically, and the level lines of $f$ are visualized, which represent the trajectories $\Psi$ of constant 
brightness. As we show, smoothness of the optical flow is affected by changing illumination and 
in the first example also by joining of characteristic curves.

\begin{example} \label{ex:flickering}

In this example the flow is not even an element of the Bochner-space $L^2((0,1);H^{-1}(\Omega))$, meaning that the anti-derivative 
of $u$ with respect to time is not in $L^2((0,1);L^2(\Omega))$. Because $L^2((0,1);H^{-1}(\Omega))$ is a strict superset 
of $L^2((0,1);L^2(\Omega))$, the elements have in general less regularity (smoothness). The next Example \ref{ex:fl:2}
provides a flow where $f$ models again changing illumination. Here the flow is in $L^2((0,1);H^{-1}(\Omega))$ but not in 
$L^2((0,1);L^2(\Omega))$. We conjecture from the difference of the two examples that the difference in smoothness is caused 
by the fact that in the first example two characteristics are joining during 
the evolution. 

We consider the 1D optical flow equation, to solve for $u$ in 
\begin{equation}
 \label{eq:ofe_1}
 \partial_x f(x,t) u(x,t) +\partial_t f(x,t)=0, \quad \forall (x,t) \in (0,1) \times (0,1)
\end{equation}
for the specific data
\begin{equation}
\label{eq:specific}
f(x,t)=\tilde{f}(x) g(t), \quad \forall (x,t) \in (0,1) \times (0,1) .
\end{equation}
$f$ represents a static scene $\tilde{f}$, which is affected by brightness variations over time, described by $g$. 
We are more specific and take:
\begin{equation}
\label{eq:10a}
 \tilde{f}(x) = x(1-x) \text{ and } g(t)=1-t, \quad \forall (x,t) \in (0,1) \times (0,1)  .
\end{equation}
The function $f$ and the level lines are plotted in Figure \ref{fig:hatF} and 
the optical flow can be calculated explicitly: 
\begin{equation*}
 u(x,t)=\frac{x(1-x)}{1-2x} \frac{1}{1-t}, \quad \forall (x,t) \in (0,1) \times (0,1), 
\end{equation*}
indicates a transport of intensities from outside to the center $1/2$.
We observe that $u(1/2,t)$ and $u(x,1)$ are singularities of the optical flow.
\begin{figure}[htp]
\begin{center}
\includegraphics[width=0.45\textwidth]{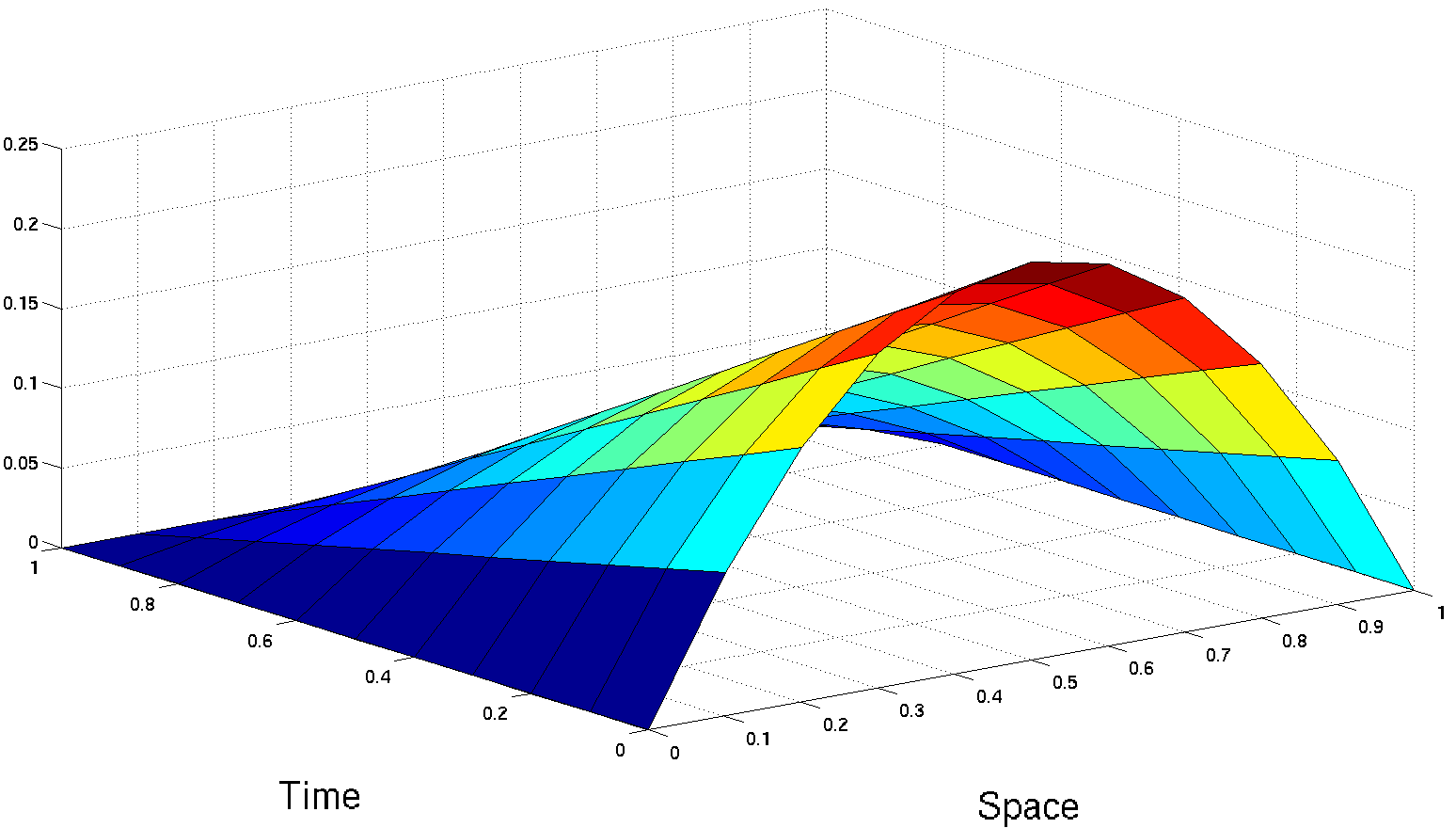} \qquad \includegraphics[width=0.45\textwidth]{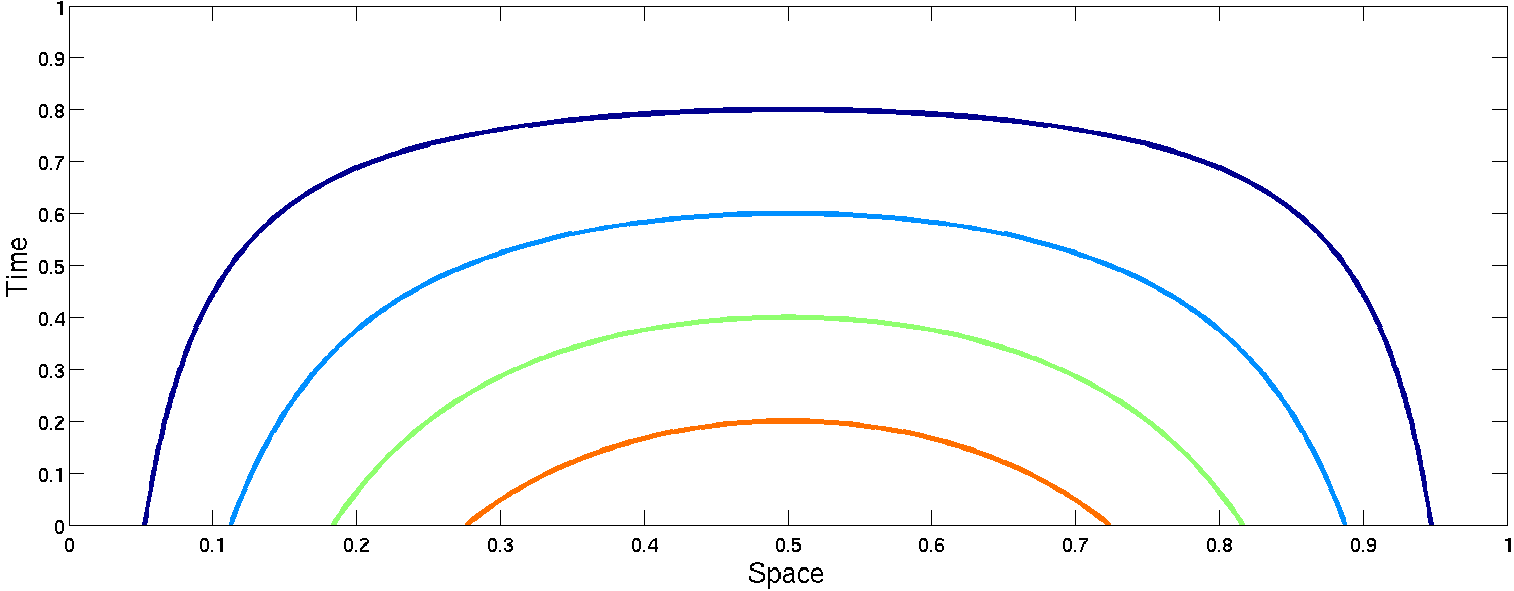}
\caption{$f(x,t)=x(1-x)(1-t)$ from \eqref{eq:10a}. Level lines of $f$ are parametrized by $(\Psi(x,t),t)$.}
\label{fig:hatF}
\end{center}
\end{figure}
From the definition of $u$ it follows that
\begin{equation*}
 \hat{u}(x,t):=\int_0^t u(x,\tau)\,\d \tau = - \frac{x(1-x)}{1-2x} \log(1-t), \quad \forall (x,t) \in (0,1) \times (0,1) 
\end{equation*}
and thus
\begin{equation*}
\norm{\hat{u}}_{L^2((0,1)^2)}^2
 = 
 \int_0^1 \frac{x^2(1-x)^2}{(1-2x)^2} \,\d x \int_0^1 \log^2(1-t) \,\d t\\
 = 
 2 \int_0^1 \frac{x^2(1-x)^2}{(1-2x)^2}\,\d x 
 = \infty,
\end{equation*}
or in other words $\hat{u} \notin L^2((0,1)^2)$. 
\end{example}

\begin{example}
\label{ex:fl:2}This example is similar to Example \ref{ex:flickering}, and simulates again a day to night illumination, 
with the difference that characteristics of $f$ never join. 
We consider input data $f$ of the form \eqref{eq:specific} with 
\begin{equation}
\label{eq:10b}
 \tilde{f}(x) = x(1-x) \text{ and } g(t)=\exp \set{-\frac{1}{\beta}(1-t)^{\beta}} \text{ with some } 0 < \beta < 1
\end{equation}
for $(x,t) \in \hat{\Omega}:=(0,1/4) \times (0,1)$. 
The optical flow is given by 
\begin{equation*}
 u(x,t)=-\frac{x(1-x)}{1-2x} (1-t)^{\beta-1}.
\end{equation*}
Integrating this function over time gives
\begin{equation*}
 \hat{u}(x,t):=\int_0^t u(x,\tau)\,\d \tau = \frac{x(1-x)}{1-2x} \frac{1}{\beta}((1-t)^\beta-1),
\end{equation*}
\begin{figure}[h]
\begin{center}
\includegraphics[scale=0.6]{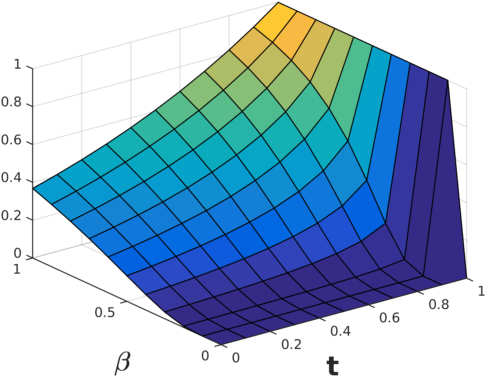} 
\end{center}
\caption{$g(t)=\exp \set{-\frac{1}{\beta}(1-t)^\beta}$}
\end{figure}
and consequently with

\begin{equation*}
 C:= \int_0^{1/4} \frac{x^2(1-x)^2}{(1-2x)^2}\,\d x < \infty\,,
\end{equation*}
we get
\begin{equation*}
\begin{aligned}
\norm{u}_{L^2(\hat{\Omega})}^2
&= C \int_0^1  t^{2\beta-2}\,\d t = \left\{ \begin{array}{rl} 
                                           C \frac{1}{2\beta-1} & \text{ if } \beta > \frac{1}{2}\,,\\
                                           \infty & \text{else}\;.
                                          \end{array} \right.\\
\norm{\hat{u}}_{L^2(\hat{\Omega})}^2
&= \frac{C}{\beta^2} \int_0^1  t^{2\beta}-2t^\beta+1\,\d t =                                           
                                          \frac{C}{\beta^2} 
                                          \left( \frac{1}{2\beta+1} - \frac{2}{\beta+1} + 1\right)& \text{ if } \beta > 0 \,.
\end{aligned}
\end{equation*}

This shows that $u \notin L^2(\hat{\Omega})$ for every $\beta\in(0,\frac{1}{2}]$, 
but  $\hat{u} \in L^2(\hat{\Omega})$ for all $\beta\in(0,1)$.
\end{example}
The bottom line of these examples is that illumination changes, such as 
flickering, may result in singularities of the optical flow
and a violation of standard smoothness assumptions of the optical flow (such as $\vec{u} \in L^2((0,1); H^s(\Omega))$ for some $s>0$).
The potential appearance of the singularities motivates us to consider 
regularization terms for optical flow computations, which allow for 
singularities over time, such as negative Sobolev-norms.

\section{Optical flow decomposition: basic setup and formalism} 
\label{section:oflow}

In this paper we derive an optical flow model which allows for decomposing the flow field into 
spatial and temporal components. We consider each frame of the movie $\set{f(\cdot,t): t \in (0,1)}$ 
defined on the two-dimensional spatial domain $\Omega=(0,1)^2$.

We assume that the optical flow field is a compound of two flow field components
\begin{equation*}
\begin{aligned}
\vu(\vx,t) =&
\vi{1}(\vx,t) + \vi{2}(\vx,t)= \vect{\vci{1}{1}(\vx,t) \\ \vci{1}{2}(\vx,t)} + 
\vect{\vci{2}{1}(\vx,t) \\ \vci{2}{2}(\vx,t)},\\ 
& \forall \vec{x}\in \Omega \text{ and } t \in (0,1).
\end{aligned}
\end{equation*}
Because there appears a series of indices and variables we specify the notation in Table \ref{tab:continuous}.
\begin{table}[bht]
\begin{center}
 \begin{tabular}{|S r|p{0.6\textwidth}|}
  \hline
 $\vx = (x_1,x_2)$ & vector in two-dimensional Euclidean space \\
  \hline
 $\partial_k = \frac{\partial}{\partial x_k}$ & differentiation with respect to spatial variable $x_k$ \\
  \hline
 $\partial_t = \frac{\partial}{\partial t}$ & differentiation with respect to time \\
  \hline
 $\nabla = (\partial_1, \partial_2)^T$ & gradient in space \\
  \hline
 $\nabla_3 = (\partial_1, \partial_2, \partial_t)^T$ & gradient in space and time \\ 
  \hline
 $\nabla \cdot = \partial_1 + \partial_2$ & divergence in space \\ 
  \hline
 $\nabla_3 \cdot = \partial_1 + \partial_2 + \partial_t$ & divergence in space and time\\
  \hline
 $\vec{n}$ & outward pointing normal vector to $\Omega$\\ 
  \hline 
 $f$ & input sequence\\
 \hline
 $f(\cdot,t)$ & movie frame\\
 \hline
 $\vi{i}$ & optical flow module, $i=1,2$\\ 
 \hline
 $\vu = \vi{1} + \vi{2}$ & optical flow \\
 \hline
 $\vci{i}{j}$ & $j$-th optical flow component of the $i$-th module \\
 \hline
 $\widehat{u}(\cdot,t) = \int_0^t u(\cdot,\tau)\,\d \tau$ & primitive of $u$\\
 \hline
  $\widehat{\widehat{u}}(\cdot,t) = -
  \int_t^1 \widehat{u}(\cdot,\tau)\,\d \tau$ & 2nd primitive of $u$ - note that $\partial_t \widehat{\widehat{u}}(\cdot,t)=\widehat{u}(\cdot,t)$\\
 \hline
 \end{tabular}
  \vspace{0.3cm}
\caption{Continuous notation.}
\label{tab:continuous}
\end{center}
\end{table}
\bigskip\par\noindent
The OFE-equation \eqref{eq:OFE} contains four unknown (real valued) functions $u_j^{(i)}$, $i,j=1,2$, and thus 
is highly under-determined. To overcome the under-determinacy, the problem is formulated as a 
constrained optimization problem, to determine, for some fixed $\alpha > 0$,
\begin{equation}
 \label{eq:hard}
 \text{argmin} \left( \Reg{1}(\vi{1}) + \alpha \Reg{2}(\vi{2})\right)
\end{equation}
subject to \eqref{eq:OFE}. 
\bigskip\par\noindent
Instead of solving the constrained optimization problem, we use a soft variant and minimize the unconstrained regularization functional:
\begin{equation}
\begin{aligned}
\label{eq:E}
\mathcal{F}(\vi{1},\vi{2}) &:= \mathcal{E}(\vi{1},\vi{2}) + \sum_{i=1}^2 \alpha^{(i)} \Reg{i}(\vi{i}),\\
\mathcal{E}(\vi{1},\vi{2}) &:= 
 \int\limits_{\Omega \times(0,1)}\!\!\!\!\!\!(\nabla f\cdot (\vi{1}+\vi{2}) + \partial_t f)^2\,
\d \vx\d t \quad \text{ with } \alpha = \frac{\alpha^{(2)}}{\alpha^{(1)}}.
\end{aligned}
\end{equation}
For the sake of simplicity of presentation, we omit arguments of the functions $\vci{i}{j}$ and $f$, whenever 
it simplifies the formulas without causing misinterpretations.
\bigskip\par\noindent
In the following we design the regularizers $\Reg{i}$:
\begin{itemize}
\item For $\Reg{1}$ we use a common spatial-temporal regularization functional for optical flow regularization (see for instance 
\cite{WeiSchn01b}): 
\begin{equation}\label{ConstraintU1}
 \Reg{1}(\vi{1})
 := \int\limits_{\Omega \times(0,1)}\nu \left(\abs{\nabla_3 \vci{1}{1}}^2 +\abs{\nabla_3 \vci{1}{2}}^2\right)\d \vx\d t,
\end{equation}
where $\nu: [0,\infty) \rightarrow [0,\infty)$ is a monotonically increasing, differentiable function satisfying that 
$r \to \nu(r^2)$ is convex.

According to \cite{WeiSchn01b} we use
\begin{equation}
\nu(r)=\epsilon r + (1-\epsilon)\lambda^2 \left(\sqrt{1+\frac{r}{\lambda^2}}-1\right)\,, \quad \forall r \in [0,\infty)\,,
\end{equation}
with some fixed $0<\epsilon \ll 1$ and $\lambda >0$. Note, that in \cite{WeiSchn01b} the term $-1$ is not used. 
However, since it is a constant, it does not influence the optimization. Using this term guarantees that $\nu(0)=0$.
Moreover, we denote by $\nu'$ the derivative of $\nu$ with respect to $r$.
\item 
$\Reg{2}$ is designed to penalize for variations of the second component in time. 
Motivated by Y.~Meyer's book \cite{Mey01}, we introduce a regularization term, which is non-local in {\bf time}. 
We have seen in Examples \ref{ex:flickering} and \ref{ex:fl:2} that $u$ may violate $L^2$-smoothness in case of 
changing illumination conditions.
Variations of Meyer's $G$-norm where used in energy functionals for calculating {\bf spatial} decompositions 
of the optical flow \cite{AbhBelSch09,KirLanSch14}. It is a challenge to compute the $G$-norm efficiently, and therefore workarounds have been proposed. 
For instance \cite{VesOsh03} proposed as an alternative to the $G$-norm the following realization for the $H^{-1}$--norm:
For a generalized function $u: (0,1) \to \R$, they defined  
$\norm{u}^2_{H^{-1}} := - \int_0^1 u(t) \partial_{tt}^{-1} u(t) \d t.$ 
Here, we use the following {\bf temporal} $H^{-1}$-norm for regularization:
\begin{equation}
\label{eq:2}
 \Reg{2}(\vi{2}) := \!\!\!\!\int\limits_{\Omega \times (0,1)} \!\!\!\! \abs{ \int_0^t \vi{2}(\vx,\tau) \d \tau }^2 \!\!\!\!\d \vx \d t =
\sum_{j=1}^2\int\limits_{\Omega \times (0,1)} \!\!\!\! \left(\vgcii{2}{j} \right)^2 \!\! \d \vx \d t .
\end{equation}
To see the analogy with the $\norm{\cdot}_{H^{-1}}$-norm from \cite{VesOsh03} 
we note that the second primitive of the optical flow component $\vi{2}$, as defined in Table \eqref{tab:continuous}, satisfies 
\begin{equation}
\label{eq:pde2a}
 \begin{aligned}
  \vgci{2}{j}(\vx,1)=0 ,\quad
  \partial_t \vgci{2}{j}(\vx,0) = \vgcii{2}{j}(\vx,0) = 0,\quad \forall j=1,2 \text{ and }\vx \in \Omega.
 \end{aligned} 
 \end{equation}
Then, by integration by parts it follows that
\begin{equation*}
- \int_0^1 \underbrace{\vgci{2}{j}(t)}_{=\partial_{tt}^{-1}\vci{2}{j}} \vci{2}{j}(t)\,\d t = 
\int_0^1 \left(\vgcii{2}{j}(t)\right)^2 \,\d t
\end{equation*}
and therefore 
\begin{equation}
\label{eq:2a}
\Reg{2}(\vi{2}) = \sum_{j=1}^2 \int_\Omega \norm{\vci{2}{j}(\vx,\cdot)}_{H^{-1}}^2\d \vx .
\end{equation}
\end{itemize}
Existence of a minimizer of $\mathcal{F}$, defined in \eqref{eq:hard}, in an infinite dimensional 
functional analytical setting is a complicated issue. However, for some surrogate model, we can guarantee 
well--posedness by using the following lemma from \cite{AbhBelSch09}:
\begin{lemma}[\cite{AbhBelSch09}]
Let $f \in C^1(\overline{\Omega} \times [0,1])$. We consider $t \in [0,1]$ fixed, and for  
$f := f(\cdot,t)$ we define  $A_0:=  \nabla f (\nabla f)^T$. Then 
$\inner{\vw_1}{\vw_2}_{A_0} = \int_\Omega \vw_1^T A_0 \vw_2$ is an inner product and 
$\abs{\vw}_{A_0}^2 := \inner{\vw}{\vw}_{A_0} = \int_\Omega \vw^T A_0 \vw$ is a seminorm on $L^2(\Omega;\R^2)$.
\end{lemma}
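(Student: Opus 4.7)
The plan is to verify the three defining properties of a positive semi-definite symmetric bilinear form (bilinearity, symmetry, non-negativity) and then derive the three seminorm properties (non-negativity, absolute homogeneity, triangle inequality) as formal consequences. The key pointwise observation is that, since $A_0(\vx) = \nabla f(\vx)(\nabla f(\vx))^T$ is a rank-one outer product, one has the identity
\begin{equation*}
\vw_1^T A_0 \vw_2 = (\nabla f \cdot \vw_1)(\nabla f \cdot \vw_2),
\qquad \vw^T A_0 \vw = (\nabla f \cdot \vw)^2 \ge 0 .
\end{equation*}
This single formula makes every required property transparent.

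First I would check that the integrals are well defined for $\vw_1,\vw_2 \in L^2(\Omega;\R^2)$. Since $f \in C^1(\overline{\Omega}\times[0,1])$ and $\overline{\Omega}$ is compact, $\nabla f(\cdot,t)$ is bounded, so $\nabla f \cdot \vw_i \in L^2(\Omega)$ and the pointwise product above is in $L^1(\Omega)$ by Cauchy--Schwarz. Bilinearity in each argument is then immediate from the bilinearity of the scalar product and the linearity of the integral. Symmetry follows either from $A_0^T = A_0$ or directly from the product form, and non-negativity of $\inner{\vw}{\vw}_{A_0}$ is read off the squared expression above.

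For the seminorm part, non-negativity and absolute homogeneity are immediate from $\abs{\vw}_{A_0}^2 = \int_\Omega (\nabla f \cdot \vw)^2$. The triangle inequality reduces, after squaring, to the Cauchy--Schwarz inequality for the semi-inner product, which in turn follows from the standard argument of expanding $0 \le \abs{\vw_1 + s\,\vw_2}_{A_0}^2$ as a quadratic in $s\in\R$ and requiring a non-positive discriminant; this argument only needs bilinearity, symmetry, and non-negativity, and does not require definiteness.

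There is really no hard step here; the only point deserving emphasis is that the result is genuinely a \emph{semi}norm and the corresponding bilinear form only a semi-inner product, because $A_0$ has rank one at each point: $\abs{\vw}_{A_0} = 0$ merely forces $\nabla f \cdot \vw = 0$ almost everywhere on $\Omega$, not $\vw = 0$. In particular any $\vw$ pointwise orthogonal to $\nabla f$ (which, whenever $\nabla f \ne 0$, spans a one-dimensional subspace of $\R^2$) lies in the kernel, so the use of the word ``inner product'' in the statement has to be read in the positive semi-definite sense, consistent with the conclusion that $\abs{\cdot}_{A_0}$ is only a seminorm.
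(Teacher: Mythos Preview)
Your argument is correct and complete: the rank-one identity $\vw_1^T A_0 \vw_2 = (\nabla f\cdot\vw_1)(\nabla f\cdot\vw_2)$ together with boundedness of $\nabla f$ on $\overline{\Omega}$ gives well-definedness, bilinearity, symmetry, and non-negativity, and the seminorm properties follow by the standard Cauchy--Schwarz-from-nonnegativity argument. Your remark that the form is only positive \emph{semi}-definite (kernel consisting of fields a.e.\ orthogonal to $\nabla f$) is well taken and clarifies the slightly loose wording of the statement.

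There is nothing to compare against in the paper itself: the lemma is quoted from \cite{AbhBelSch09} without proof, so the paper offers no independent argument. Your write-up would serve perfectly well as the missing verification.
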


Let 
\begin{equation}
\label{eq:G}
\vec{\rho}:=\frac{1}{\abs{\nabla f}}(-f_tf_x,-f_tf_y)^T. 
\end{equation}
Then $A_0^{1/2}=\frac{1}{|\nabla{f}|}A_0$, where the root of a matrix is defined via spectral decomposition. 
Moreover,
\begin{equation}
\label{eq:equiv}
  \norm{A_0^{1/2} \vw - \vec{\rho}}^2_{L^2(\Omega;\R^2)}
  = \norm{\nabla f\cdot \vw + f_t}^2_{L^2(\Omega)} , \quad \forall\vw \in L^2(\Omega;\R^2).
\end{equation}

Let 
\begin{equation}
 \label{eq:A}
 A:=((A_0)^T A_0+\epsilon \text{Id})^\frac{1}{2},
\end{equation}
where, $\text{Id} \in \R^{2 \times 2}$ denotes the identity matrix and $\epsilon>0$ is a small regularization parameter. 
$A$ is uniformly positive definite on $\overline{\Omega}$. We note that in comparison with \cite{AbhBelSch09}, we are 
not using a smoothed 
version of $A$, because we already made the assumption that $A \in C^0(\overline{\Omega} \times [0,1];\R^{2 \times 2})$ (because $f \in C^1(\overline{\Omega} \times [0,1])$).
Using the notation
\begin{equation}
 \label{eq:TG}
\vec{\phi}=A^{-\frac{1}{2}}\vec{\rho}\,,
\end{equation}
we assume
\begin{equation}
\label{eq:equiv2}
\norm{\nabla f\cdot \vw + f_t}_{L^2(\Omega)} \approx
\norm{\vw-\vec{\phi}}_A\;.
\end{equation}
After choosing a fixed $\epsilon > 0$ we consider minimization of the surrogate model, consisting in minimizing the functional
\begin{equation}
\begin{aligned}
\label{eq:E_sur}
\mathcal{F}_s(\vi{1},\vi{2}) &:= \mathcal{E}_s(\vi{1},\vi{2}) + \sum_{i=1}^2 \alpha^{(i)} \Reg{i}(\vi{i}),\\
\mathcal{E}_s(\vi{1},\vi{2}) &:= 
 \int\limits_{\Omega \times(0,1)} \norm{(\vi{1}+\vi{2})-\vec{\phi}}_A^2
\d \vx\d t \quad \text{ with }\alpha = \frac{\alpha^{(2)}}{\alpha^{(1)}}
\end{aligned}
\end{equation}
over the Bochner-Space 
\begin{equation*}
X := \set{\vi{1} \in W^{1,2}(\Omega \times (0,1);\R^2): \vi{1}(\cdot,T)=0} \times L^2(\Omega \times (0,1);\R^2).
\end{equation*}
We are proving that the surrogate model attains a minimizer on $X$. Note that $W^{1,2}(\Omega \times (0,1);\R^2)$ 
is the space of vector valued, weakly differentiable functions with respect to space and time, and also recall that 
$L^2((0,1);L^2(\Omega;\R^2))=L^2((0,1)\times \Omega;\R^2)$.
\begin{theorem}
$\mathcal{F}_s$ attains a minimizer on $X$.
\end{theorem}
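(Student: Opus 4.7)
The plan is to invoke the direct method of the calculus of variations on $X$. I will take a minimizing sequence $(\vi{1}_n,\vi{2}_n)_{n\in\N}$ for $\mathcal{F}_s$, extract bounds that yield a weakly convergent subsequence in $X$, and then conclude by weak lower semicontinuity of every term in $\mathcal{F}_s$. Nonnegativity of $\mathcal{E}_s$ and of both regularizers shows $\inf \mathcal{F}_s\ge 0$, so the minimizing sequence is well defined and, for some constant $M>0$, we may assume $\mathcal{F}_s(\vi{1}_n,\vi{2}_n)\le M$ for all $n$.

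Next I would establish coercivity. Because $\nu(r)\ge \epsilon r$, the bound on $\alpha^{(1)}\Reg{1}(\vi{1}_n)$ yields a uniform bound on $\|\nabla_3 \vi{1}_n\|_{L^2(\Omega\times(0,1);\R^{2\times 3})}$. Combined with the trace condition $\vi{1}_n(\cdot,T)=0$, a Poincaré inequality in the time variable gives a uniform bound on $\|\vi{1}_n\|_{W^{1,2}(\Omega\times(0,1);\R^2)}$. For the second component, I would use that $A$ is uniformly positive definite on $\overline\Omega\times[0,1]$ (by \eqref{eq:A}), so there exists $c>0$ with $\|\vw\|_A^2\ge c\|\vw\|^2_{L^2}$ pointwise in $t$. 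Hence $\mathcal{E}_s(\vi{1}_n,\vi{2}_n)\le M$ yields a uniform bound on $\|\vi{1}_n+\vi{2}_n-\vec\phi\|_{L^2(\Omega\times(0,1);\R^2)}$; together with the already established bound on $\vi{1}_n$ and the fact that $\vec\phi\in L^2$ (since $f\in C^1$ implies $\vec\phi\in L^\infty$), this gives a uniform bound on $\|\vi{2}_n\|_{L^2(\Omega\times(0,1);\R^2)}$. Therefore we can pass to a subsequence such that $\vi{1}_n\rightharpoonup \vi{1}$ in $W^{1,2}(\Omega\times(0,1);\R^2)$ and $\vi{2}_n\rightharpoonup \vi{2}$ in $L^2(\Omega\times(0,1);\R^2)$; continuity of the trace operator $W^{1,2}\to L^2(\Omega\times\{T\})$ preserves the constraint $\vi{1}(\cdot,T)=0$, so $(\vi{1},\vi{2})\in X$.

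It remains to verify weak lower semicontinuity of each contribution. The data term $\mathcal{E}_s$ is a continuous convex quadratic functional of $\vi{1}+\vi{2}$ on $L^2$ (using uniform boundedness of $A$), so it is weakly lower semicontinuous. For $\Reg{1}$, the assumption that $r\mapsto \nu(r^2)$ is convex together with monotonicity of $\nu$ makes the integrand a convex function of $\nabla_3\vci{1}{j}$, and standard convex integral lower semicontinuity theorems apply on $W^{1,2}$. For $\Reg{2}$, I would write the time primitive as a bounded linear Volterra operator $T:L^2(\Omega\times(0,1);\R^2)\to L^2(\Omega\times(0,1);\R^2)$, $(T\vw)(\vx,t):=\int_0^t \vw(\vx,\tau)\,\d\tau$ (boundedness follows from Cauchy--Schwarz, $\|T\vw\|_{L^2}^2\le \tfrac{1}{2}\|\vw\|_{L^2}^2$). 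Then $\Reg{2}(\vi{2})=\|T\vi{2}\|_{L^2}^2$ is the composition of a bounded linear operator with a continuous convex function, hence weakly lower semicontinuous. Adding the three lower semicontinuity estimates gives $\mathcal{F}_s(\vi{1},\vi{2})\le \liminf_n \mathcal{F}_s(\vi{1}_n,\vi{2}_n)=\inf_X \mathcal{F}_s$, so $(\vi{1},\vi{2})$ is a minimizer.

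The main obstacle I anticipate is the coercivity of the $\vi{2}$-variable: $\Reg{2}$ by itself is the square of a negative (temporal) Sobolev norm and therefore does not control $\|\vi{2}\|_{L^2}$. The key observation that removes this difficulty is that the surrogate data term $\mathcal{E}_s$ uses the uniformly positive definite matrix $A$ from \eqref{eq:A}, so $L^2$ coercivity for $\vi{2}$ is inherited from the data term once $\vi{1}$ is already controlled through $\Reg{1}$; without the regularization $\epsilon\,\mathrm{Id}$ in $A$, the degeneration of $A_0$ on $\{\nabla f=0\}$ would block this step and one would only obtain a weak limit on the flow direction.
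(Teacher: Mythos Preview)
Your argument is correct and follows the direct method, as does the paper, but the coercivity step is organized differently. The paper first uses the data term to bound $\vi{1}_n+\vi{2}_n$ in $L^2$ and $\Reg{2}$ to bound $\widehat{\vec u}^{(2)}_n$ in $L^2$; from these two bounds it infers that $\widehat{\vec u}^{(1)}_n$ is bounded in $L^2$, and only then obtains $\|\vi{1}_n\|_{L^2}$ via the integration-by-parts identity
\[
\int_{\Omega\times(0,1)} |\vi{1}_n|^2
= -\int_{\Omega\times(0,1)} \partial_t \vi{1}_n \cdot \widehat{\vec u}^{(1)}_n
\le \|\partial_t \vi{1}_n\|_{L^2}\,\|\widehat{\vec u}^{(1)}_n\|_{L^2},
\]
which uses the trace condition $\vi{1}_n(\cdot,1)=0$ to kill the boundary term. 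You instead apply a temporal Poincar\'e inequality directly: the same trace condition together with the $\Reg{1}$-bound on $\partial_t \vi{1}_n$ already gives $\|\vi{1}_n\|_{L^2}$ bounded, and then the uniformly positive definite $A$ in $\mathcal{E}_s$ yields the $L^2$ bound on $\vi{2}_n$. Your route is shorter and, notably, does not use $\Reg{2}$ for coercivity at all; the paper's route, by contrast, makes explicit how the three pieces $\mathcal{E}_s$, $\Reg{1}$, $\Reg{2}$ interact, which fits its narrative about the role of the temporal $H^{-1}$ term. A minor remark: your claim $\vec\phi\in L^\infty$ is more than you need (and requires a word about the set $\{\nabla f=0\}$); for the argument it suffices that $\vec\phi\in L^2$, which is immediate from $\|\vec\phi\|_A^2=\int_\Omega |\vec\rho|^2=\int_\Omega f_t^2$ and the equivalence of $\|\cdot\|_A$ with the $L^2$ norm.
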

\begin{proof} 
The proof is done by several estimates: 
\begin{itemize}
 \item Because $(\vec{0},\vec{0}) \in X$ it follows that 
       $C:=\mathcal{F}_s (\vec{0},\vec{0}) = \int_0^1 \norm{\vec{\phi}}_A^2\,\d t < \infty.$
 \item Now, let $(\vi{1}_n,\vi{2}_n)$ be a minimizing sequence of $\mathcal{F}_s$, then
       \begin{equation*}
        \begin{aligned}
         ~ & \int_0^1 \norm{(\vi{1}_n + \vi{2}_n)-\vec{\phi}}_A^2 \d t \leq C,\\
         ~ & \int_0^1 \norm{\nabla_3 \vi{1}_n}_{L^2(\Omega;\R^{3 \times 2})}^2\d t \leq 
         \frac{1}{\epsilon} \Reg{1}(\vi{1}_n) \leq \frac{C}{\epsilon \alpha^{(1)}}, \\
         ~ & 
         \norm{\widehat{\vec{u}}_n^{(2)}}_{L^2(\Omega \times (0,1);\R^2)} \leq \frac{C}{\alpha^{(2)}}.
        \end{aligned}
       \end{equation*}
 \item For the surrogate model $\norm{\cdot}_A$ is a norm, which is equivalent to the $L^2(\Omega;\R^2)$-norm, and 
       thus $(\vi{1}_n + \vi{2}_n)$ is uniformly bounded in $L^2((0,1);L^2(\Omega))$, and therefore admits a weakly 
       convergent subsequence with limit $\vec{\phi} \in L^2((0,1);L^2(\Omega))$. 
       This subsequence, in turn has a subsequence (which is also denoted by $\cdot_n$, for the sake of simplicity of notation) 
       such that also $(\partial_t \vi{1}_n)$ and $(\widehat{\vec{u}}_n^{(2)})$ are weakly convergent in 
       $L^2(\Omega \times (0,1);\R^2)$, to some $\vec{\mu}$ and $\vec{\psi}$, respectively.   
 \item Let $\vec{\zeta} \in L^2(\Omega \times (0,1);\R^2)$ be arbitrary, then $(\vx,t) \to \int_t^1 \vec{\zeta}(\vx,\tau) \d \tau 
       \in L^2(\Omega \times (0,1);\R^2)$. Using the weak convergence of the subsequence it follows that 
       \begin{equation}
        \begin{aligned}
         \int_{\Omega \times (0,1)} \widehat{\vec{\phi}} \cdot \vec{\zeta}  \, \d \vx \d t
         = & \int_{\Omega \times (0,1)} \vec{\phi}(\vx,t) \cdot \int_t^1 \vec{\zeta}(\vx,\tau) \d \tau \d \vx \d t \\
         = & \lim_{n \to \infty} \int_{\Omega \times (0,1)} (\vi{1}_n + \vi{2}_n)(\vx,t) \cdot \int_t^1 \vec{\zeta}(\vx,\tau) \d \tau \d \vx \d t \\
         = & \lim_{n \to \infty} \int_{\Omega \times (0,1)} \int_0^t (\vi{1}_n + \vi{2}_n)(\vx,\tau)\d \tau \cdot \vec{\zeta}(\vx,t) \d t \d \vx \\
         = & \lim_{n \to \infty} \int_{\Omega \times (0,1)} \widehat{\vec{u}}_n^{(1)} \cdot \vec{\zeta} \; \d t \d \vx + 
               \int_{\Omega \times (0,1)} \vec{\psi} \cdot  \vec{\zeta} \; \d t \d \vx\;.
        \end{aligned}
       \end{equation}
       This means that $(\widehat{\vi{1}_n})$ converges weakly to $\widehat{\vec{\phi}}-\vec{\psi}$, and in particular 
       $\norm{\widehat{\vi{1}_n}}_{L^2(\Omega \times (0,1))}$ is uniformly bounded, let us say by $D$.
       The last item shows that that $(\vi{1}_n)$ converges to $\vec{\phi}-\partial_t \vec{\psi}$ in a distributional sense.
 \item From integration by parts it follows that if $\vi{1}_n(\cdot,T) \equiv 0$ that
        \begin{equation}
        \begin{aligned}
         ~ & \int_{\Omega \times (0,1)} \abs{\vi{1}_n}^2 \d \vx \d t \\
          = & - \int_{\Omega \times (0,1)} \partial_t \vi{1}_n(\vx,t) \cdot \left( \int_0^t \vi{1}_n(\vx,\tau) \,\d \tau\right) \d t \d \vx\\
          \leq & 
          \sqrt{
          \int_{\Omega \times (0,1)} \abs{\partial_t \vi{1}_n(\vx,t)}^2 \d t \d \vx 
          \int_{\Omega \times (0,1)} \abs{\int_0^t \vi{1}_n(\vx,\tau) \,\d \tau}^2 \d t \d \vx} \leq 
          D \sqrt{\frac{C}{\epsilon \alpha^{(1)}}}\;.
        \end{aligned}
       \end{equation}
       As a consequence we have $\vec{\phi}-\partial_t \vec{\psi} \in L^2(\Omega \times (0,1);\R^2).$
\item The final results follows from the lower semicontinuity of the functionals 
      $\vec{u} \to \int_0^1 \norm{\vec{u}-\vec{\phi}}^2_A\,dt$ and $\Reg{i}(\vi{i})$ for $i=1,2$.
\end{itemize}
\end{proof}
We emphasize that the critical issue in the above proof is the coercivity, which could be enforced 
by various other modifications of the functional $\mathcal{F}$ as well. 
Another possibility, to the one mentioned above (which we feel to be less elegant) is to add a small 
regularization term for the $L^2$-norm of $\vi{1}$ to the functional $\mathcal{F}$.
Since the use of surrogate models has only marginal impact on the numerical reconstructions we ignore their use in the following.

\subsection{Energy functional and minimization}
We are determining the optimality conditions for minimizers of $\mathcal{F}$ introduced in \eqref{eq:E}.
Necessary conditions for a minimizer are that the directional derivatives vanish for all 2-dimensional 
vector-valued functions $\hi{j}:\Omega \times (0,1) \to \R^2$, $j=1,2$. 
To formulate these conditions we use the simplifying notation:
\begin{equation*}
 (\mathcal{E},\mathcal{F}):=(\mathcal{E},\mathcal{F})(\vi{1},\vi{2}), \; \Reg{i} := \Reg{i}(\vi{i}) \text{ and }
 \text{res} = \nabla f \cdot (\vi{1}+\vi{2}) + \partial_t f\;.
\end{equation*}
Therefore the directional derivative of $\mathcal{F}$ in direction $\hi{j}$ at $\vec{u}=(\vi{1},\vi{2})$ is given by:
 \begin{equation*} 
(\partial_{\hi{j}}\mathcal{F}) \vec{u}
=\lim_{s \to 0} 
\frac{\mathcal{F}(\vi{1}+s\delta_{1j}\hi{1},\vi{2}+s\delta_{2j}\hi{2})-\mathcal{F}}{s}=0
\end{equation*}
where $\delta_{ij} = 1$ for $i=j$ and zero else is the Kronecker symbol.
The gradient of the functional $\mathcal{F}$ from \eqref{eq:E} can be determined by calculating the 
directional derivatives of $\mathcal{E}$ and $\Reg{i}$, separately. 
\begin{itemize}
\item
The directional derivative of $\mathcal{E}$ in direction $\hi{j}$ is given by 
\begin{equation}
\label{eq:r3}
   (\partial_{\hi{j}} \mathcal{E}) \vec{u}
 =  2\int\limits_{\Omega \times (0,1)} \text{res} (\nabla f \cdot \hi{j})\,\d \vx \d t .
\end{equation}

\item The directional derivative of $\Reg{1}$ at $\vi{1}$ in direction $\hi{1}$ is determined as follows:
Let us abbreviate for simplicity of presentation 
\begin{equation*}
\nu :=\nu \left(\abs{\nabla_3 \vci{1}{1}}^2 + \abs{\nabla_3 \vci{1}{2}}^2 \right), \quad 
\nu' :=\nu' \left(\abs{\nabla_3 \vci{1}{1}}^2 + \abs{\nabla_3 \vci{1}{2}}^2 \right),
\end{equation*}
then the directional derivative of $\Reg{1}$ in direction $\hi{1}$ 
(which we assume to have compact support in $\Omega \times (0,1)$)
at $\vi{1}$ is given by
\begin{equation}
 \label{eq:r1}
\begin{aligned}
& (\partial_{\hi{1}} \Reg{1})\vi{1}
=  \lim_{s \to 0}\frac{\Reg{1}(\vi{1}+s\hi{1}) - \Reg{1}}{s} \\
= & -2\int_{\Omega\times (0,1)} \nabla_3 \cdot \left(\nu' \nabla_3 \vci{1}{1} \right)\hci{1}{1} + 
    \nabla_3 \cdot \left(\nu'\nabla_3 \vci{1}{2}\right)\hci{1}{2}\d \vx \d t,
\end{aligned}
\end{equation}
where integration by parts is used to prove the final identity.
\item 
The directional derivative of $\Reg{2}$ is derived as follows:
\begin{equation}
\label{eq:r2}
 \begin{aligned} 
  & (\partial_{\hi{2}} \Reg{2})\vi{2}
=  \lim_{s \to 0}
\frac{\Reg{2}(\vi{2}+s\hi{2}) - \Reg{2}}{s} 
=  2 \sum_{j=1}^2 \int_{\Omega\times (0,1)} \vgcii{2}{j} \widehat{h}_j^{(2)} \d \vx \d t .
 \end{aligned}
\end{equation}

\end{itemize}
Moreover, it follows by integration by parts of the last line of \eqref{eq:r2}  with respect to $t$ that 
\begin{equation}
\begin{aligned}
\label{eq:r2a}
(\partial_{\hi{2}} \Reg{2})\vi{2} = - 2 \sum_{j=1}^2 \int_{\Omega\times (0,1)} \vgci{2}{j}  \hci{2}{j} \d \vx \d t .
\end{aligned}
\end{equation}

Now, because of \eqref{eq:r1} and \eqref{eq:r3} it follows that the minimizer $\vi{i}$, $i=1,2$ has to satisfy for every $j=1,2$, 
\begin{equation}
\label{eq:optin}
\begin{aligned}
\partial_j f (\nabla f \cdot (\vi{1}+\vi{2}) + \partial_t f) - \alpha^{(1)} 
\nabla_3 \cdot \left(\nu' \nabla_3 \vci{1}{j}\right) &=  0\text{ in } \Omega \times (0,1),\\
 \partial_{\vec{n}} \vci{1}{j}&=0 \text{ in } \partial \Omega \times (0,1),\\
 \partial_t \vci{1}{j}&=0 \text{ in } \Omega \times \set{0,1} .
\end{aligned}
\end{equation}
Since equations  
\eqref{eq:r3} and \eqref{eq:r2a} hold for all $\hi{2}_{j}$, it follows that for every $j=1,2$, 
\begin{equation}
\begin{aligned}
\label{eq:pde2}
\partial_j f (\nabla f \cdot (\vi{1}+\vi{2}) + \partial_t f) -  \alpha^{(2)} \vgci{2}{j} = 0 \text{ in } \Omega \times (0,1) .
\end{aligned}
\end{equation}
Thus the optimality conditions for a minimizer are \eqref{eq:optin} and \eqref{eq:pde2}.
The solution of Equation \eqref{eq:optin} has to be understood in a weak sense. The optimality condition is formal and 
would be exact if $\mathcal{F}$ would be coercive. As we demonstrated above there are several possibilities of surrogate models,
which provide coercivity. The surrogate model outlined above leading to this optimality condition would require a Dirichlet 
boundary condition for $\vi{1}$ at $\Omega \times \set{1}$. 
If we would indeed complement $\mathcal{F}$ by $\alpha_3 \norm{\vi{1}}_{L^2(\Omega \times (0,1))}$, with small $\alpha_3$, 
then the boundary conditions would in fact be the natural ones.

\section{Numerics} \label{sec:implementation}
In this section we discuss the numerical minimization of the energy functional $\mathcal{F}$ defined in \eqref{eq:E}. 
Our approach is based on solving the optimality conditions for the minimizer $\vi{1},\vi{2}$
from \eqref{eq:optin}, \eqref{eq:pde2} with a fixed point iteration. 
We call the iterates of the fixed point iteration 
$\vci{i}{j}(\vx,t;k)$, where $k=1,2,\ldots,K$ denotes the iteration number and $K$ is the maximal number of 
iterations. We summarize all the iterates of the components of flow functions 
$\vci{i}{j}$ in a tensor of size $M \times N \times T \times K$.
In this section we use the notation as reported in Table \ref{tab:discrete}.
\begin{table}
\hspace*{-3em}
 \begin{tabular}{|S r|c|}
 \hline
 $f= f(r,s,t) \in \R^{M \times N \times T}$ & input sequence\\
\hline
 $\vi{i} = \vi{i}(r,s,t;k) \in \R^{M \times N \times T \times K \times 2}$ & discrete optical flow approximating the\\
 & continuous flow $\vi{i}$ at $(\frac{r}{M-1},\frac{s}{N-1},\frac{t}{T-1})$\\
\hline
 $\partial_k^h$  & finite difference approximation in \mbox{direction} $x_k$ \\
  \hline
 $\partial_t^h$ &  finite difference approximation in \mbox{direction} $t$ \\
 \hline 
    $\Delta_x=\frac{1}{M-1}$, $\Delta_y=\frac{1}{N-1}$ and $\Delta_t=\frac{1}{T-1}$ & Discretization\\ 
\hline
$ \vgcii{2}{j}(r,s,t;k) = \Delta_t \sum_{\tau=1}^t \vci{2}{j}(r,s,\tau;k)$,  $j=1,2$
& finite difference approximation of  $\widehat{u}(\cdot,t)$ \\
\hline
$ \vgci{2}{j}(r,s,t;k)= - \Delta_t \sum_{\tau = t}^T \vgcii{2}{j}(r,s,\tau;k)$ & finite difference approximation of $\widehat{\widehat{u}}(\cdot,t)$\\
\hline
 \end{tabular}
 \vspace{0.3cm}
\caption{Discrete Notation.}
\label{tab:discrete}
 \end{table}

For every tensor $H=(H(r,s,t)) \in \R^{M \times N \times T}$ (representing all frames of a complete movie) we define the 
discrete gradient 

\begin{equation*}
\begin{aligned}
\nabla_3^h H(r,s,t) &= (\partial_1^h H(r,s,t), \partial_2^h H(r,s,t), \partial_t^h H(r,s,t))^T\\ &\text{ for } 
(r,s,t) \in \{1,...,M\}\times \{1,...,N\} \times \{1,...,T\},
\end{aligned}
\end{equation*}
where 
\begin{equation}
\label{discreteSystem}
\begin{aligned}
\partial_1^h H(r,s,t)=&
\frac{H(r+1,s,t)-H(r-1,s,t)}{2\Delta_x}\quad \text{ if } 1<r<M, \\
\partial_2^h H(r,s,t)=&
\frac{H(r,s+1,t)-H(r,s-1,t)}{2\Delta_y}\quad \text{ if } 1<s<N, \\
\partial_t^h H(r,s,t) =&
\frac{H(r,s,t+1)-H(r,s,t-1)}{2\Delta_t}\quad \text{ if } 1<t<T.
\end{aligned}
\end{equation}
Let us notice that $r,s,t$ are discrete indices corresponding to the points in space 
$r \Delta_x, s \Delta_y$ and in time $t \Delta_t$ of a continuous function $H$.
Moreover, we define the discrete divergence, which is the adjoint of the discrete gradient:
Let $(H_1,H_2,H_3)^T(r,s,t)$, then 
\begin{equation}
\begin{aligned}
 \label{eq:disc_gradient}
 \nabla_3^h \cdot (H_1,H_2,H_3)^T &= \partial_1^h H_1 + \partial_2^h H_2+ \partial_t^h H_3 .
\end{aligned}
\end{equation}
The realization of the fixed point iteration for solving the discretized equations \eqref{eq:optin} and \eqref{eq:pde2} 
reads as follows:
\begin{itemize}
\item Initialization for $k=0$: we initialize to 0 
$\vi{1}(0)$, $\vi{2}(0) \in \R^{M \times N \times K \times 2}$. Note, we leave out all indices denoting space and time positions.
\item $k \to k+1$ and $k < K$: let $\nu'(k) := \nu'(|\nabla_3^h \vci{1}{1}(k)|^2 +|\nabla_3^h \vci{2}{1}(k)|^2)$, then 

\begin{equation}\hspace{-1cm}
\begin{alignedat}{2}
\label{eq:system_disc_1}
\frac{\vci{1}{1}(k+1)-\vci{1}{1}(k)}{\Delta_\tau}& = 
\nabla_3^h \cdot \bigl(&& \nu'(k) \nabla_3^h{\vci{1}{1}(k)} \bigr)\\
&- \frac{\partial_1^h f}{\alpha^{(1)}} \biggl[&& \partial_1^h f \left(\vci{1}{1}(k+1)+\vci{2}{1}(k) \right) \biggr. \\
  & &&+ \biggl. \partial_2^h f \left(\vci{1}{2}(k)+\vci{2}{2}(k)\right) + \partial_t^h f \biggr],
\end{alignedat} 
\end{equation}
\begin{equation}
\label{eq:system_disc_2}
\begin{alignedat}{2}
\frac{\vci{1}{2}(k+1)-\vci{1}{2}(k)}{\Delta_\tau}
 &= \nabla_3^h \cdot (&& \nu'(k) \nabla_3^h{\vci{1}{2}(k)})\\
  & -\frac{\partial_2^h f}{\alpha^{(1)}} \biggl[&& \partial_1^h f \left(\vci{1}{1}(k+1)+\vci{2}{1}(k) \right) \biggr. \\
&  && + \biggl. \partial_2^h f \left(\vci{1}{2}(k+1)+\vci{2}{2}(k)\right)+\partial_t^h f \biggr],
\end{alignedat} 
\end{equation}
\begin{equation}
\label{eq:system_disc_3}
\begin{alignedat}{2}
\frac{\vci{2}{1}(k+1)-\vci{2}{1}(k)}{\Delta_\tau}
 &= \vgci{2}{1}(k)&&\\
 &-\frac{\partial_1^h f}{\alpha^{(2)}} \biggl[[&& \partial_1^h f \left(\vci{1}{1}(k+1)+\vci{2}{1}(k+1)\right) \biggr.\\
& && \biggl. + \partial_2^h f \left( \vci{1}{2}(k+1)+\vci{2}{2}(k)\right)+\partial_t^h f \biggr],
 \end{alignedat} 
\end{equation}

and

\begin{equation}
\label{eq:system_disc_4}
\begin{alignedat}{2}
\frac{\vci{2}{2}(k+1)-\vci{2}{2}(k)}{\Delta_\tau} &=  \vgci{2}{2}(k)&&\\
&-\frac{\partial_2^h f}{\alpha^{(2)}} \biggl[&& \partial_1^h f \left(\vci{1}{1}(k+1)+\vci{2}{1}(k+1)\right) \biggr.\\ 
& && \biggl. + \partial_2^h f \left( \vci{1}{2}(k+1)+\vci{2}{2}(k+1) \right)+\partial_t^h f\biggr],
\end{alignedat} 
\end{equation}

where $\Delta_\tau$ is the step size iteration parameter, which has been set to $10^{-4}$. 
In order to improve stability of the algorithm and ensure convergence we use a semi implicit iteration scheme 
as proposed in \cite{WeiSchn01b}. Indeed, let us notice that in \eqref{eq:system_disc_1}, \eqref{eq:system_disc_2},\eqref{eq:system_disc_3}, \eqref{eq:system_disc_4} one of the components of the two flow fields 
$\vi{1}$, $\vi{2}$ on the right hand side is evaluated for iteration index $k+1$.  
The system \eqref{eq:system_disc_1}, \eqref{eq:system_disc_2}, \eqref{eq:system_disc_3}, \eqref{eq:system_disc_4} 
can be solved efficiently using the special structure of the matrix equation (see \cite{WeiSchn01a,WeiSchn01b}). 
The matrix equation then is a sparse block tridiagonal matrix.This type of matrices  allows for efficient estimation of a solution through decomposition and parallelization methods \cite{LeeWri10}.
\end{itemize}
The iterations are stopped when the Euclidean norm of the relative error
$$\frac{|\vci{i}{j}(\cdot,k)-\vci{i}{j}(\cdot,k+1)|}{|\vci{i}{j}(\cdot,k)|}
,\qquad j=1,2$$ 
drops below the precision tolerance value $tol=0.05$ for at least one of the component of $\vu^{(1)}$ and one of $\vu^{(2)}$.
The typical number of iterations is much below $100$.

\section{Experiments} \label{sec:experiments}
In this section we present numerical experiments to demonstrate the potential of the proposed optical flow decomposition model.
In the first two experiments we use for visualization of the computed flow fields the standard flow color coding \cite{BakSchaLewRotBla11}. 
The flow vectors are represented in color space via the color wheel illustrated in Figure \ref{fig:colorWheel}.
For the third and fourth experiment we use a black and white visualization technique: Black means that there is 
no flow present and the gray-shade is proportional to the flow magnitude. In order to compare the optical flow computations 
quantitatively the intensity values of $f$ have been scaled to the range $[0,1]$.
\begin{figure}[htp]
 \centering
 \includegraphics[width=0.3\textwidth]{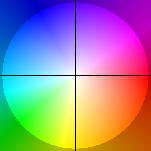}\quad
\caption{ Color Wheel. }
\label{fig:colorWheel}
\end{figure}
The used parameters are reported for each experiment separately, except for the discretization parameters 
$\Delta_x,\Delta_y,\Delta_t$, defined in Section \ref{sec:implementation}. 
In this work we consider the following four dynamic image sequences:
\begin{itemize}
\item The first experiment is performed with the video sequence from \cite{McCNovCraGal01} 
      (available at \url{http://of-eval.sourceforge.net/}) which 
      consists of forty-six frames showing a rotating sphere with some overlaid patterns.
       The analytical results from Appendix \ref{app:proof} in 1D suggest that the intensity of the 
       $\vi{2}$ component increases monotonically with increasing rotational frequency over time. 
       We verify this hypothesis numerically, now in higher dimensions. We simulate in particular two, 
       four and eight times the original motion frequency;: In order to do so, we duplicate the sequence 
       periodically, however consider it to be recorded in the same time interval $(0,1)$.  
       The flow visualized in Figure \ref{fig:u2DF} is the one between the 16th and the 17th frame of 
       every sequence. We study the behavior of the sphere at different motion frequencies with the same 
       parameter setting $\alpha^{(1)}=1$, $\alpha^{(2)}=\frac{1}{4}$. The numerical results confirm the 
       $1D$ observation that for high rotational movement $\vi{2}$ is dominant 
       (cf. Figures \ref{fig:u2DF}) and $\vi{1}$ is always $20\%$ of $\vi{2}$; in particular 
       $\vi{1}$ and $\vi{2}$ cannot be completely separated.
        \begin{figure}[htp]
        \begin{center}
        \includegraphics[width=0.3\textwidth]{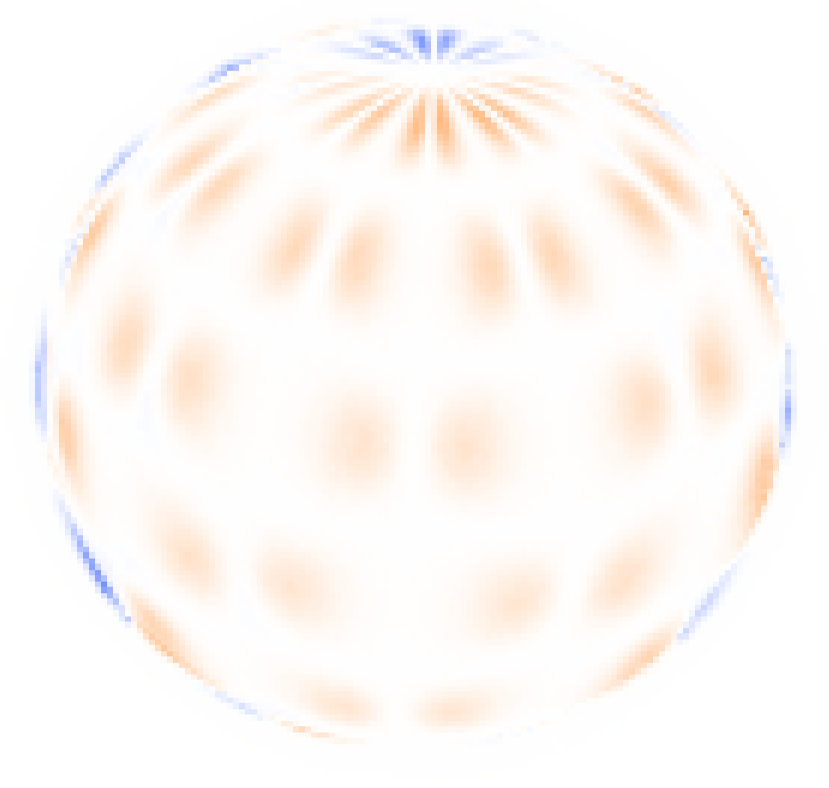} \quad 
        \includegraphics[width=0.3\textwidth]{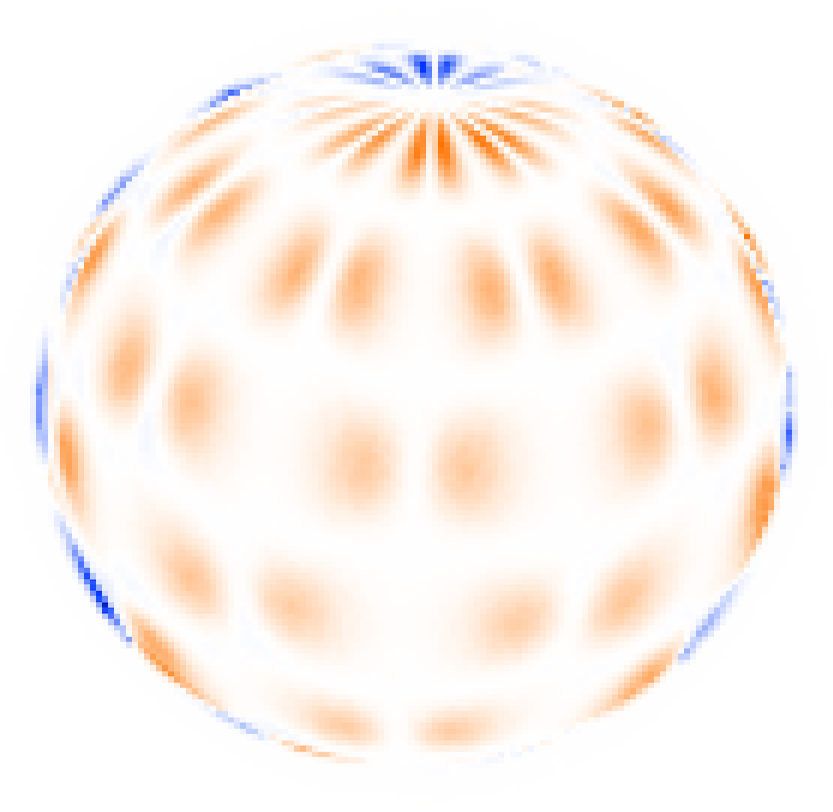} \quad
        \includegraphics[width=0.3\textwidth]{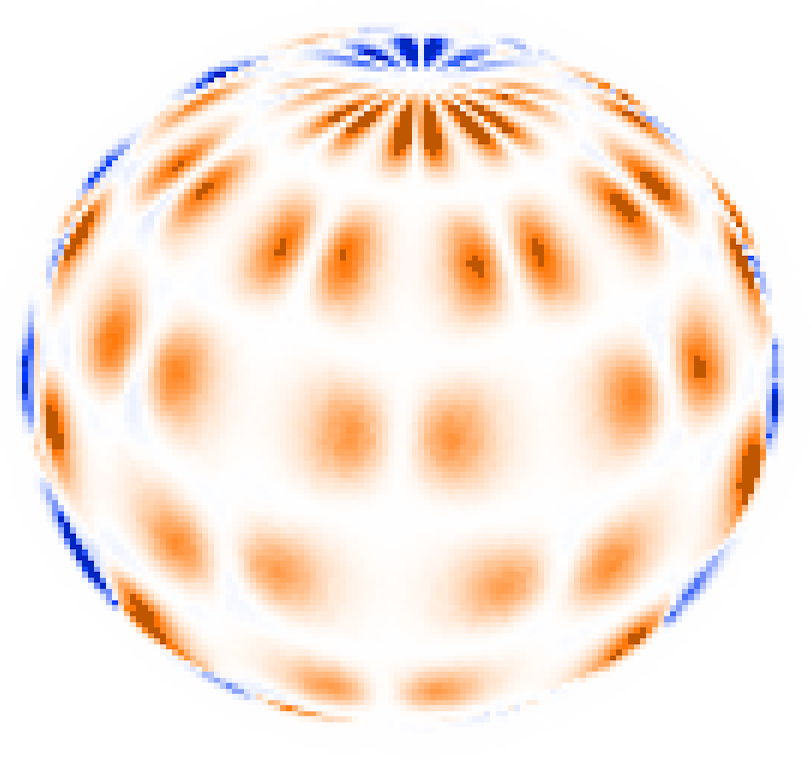}
        \label{fig:u2DF}
        \end{center}        
        \caption{$\vi{2}$ at different frequencies of rotations: $2$, $4$ and $8 \times$ faster than the original motion frequency. 
                 $\alpha^{(1)}=1$, $\alpha^{(2)}=\frac{1}{4}$. The intensity of $\vi{2}$ increases when the
                 frequency of rotation is increased.}
        \end{figure}
        
\item The second experiment concerns the decomposition of the motion in a dynamic image sequence showing a projection of a 
      cube moving over an oscillating background. The movie consists of sixty frames and can be viewed on the web-page 
      \url{http://www.csc.univie.ac.at/index.php?page=visualattention}.\linebreak 
      The background is oscillating in diagonal direction,  from the bottom left to the top right, with a periodicity of 
      four frames. In each frame the oscillation has a rate of $5\%$ of the frame size. The flow visualized in 
      Figure \ref{fig:boxBO} is the one between the 20th and the 21st frame of the sequence. 
      
      Applying the proposed method with a parameter setting $\alpha^{(1)}=10^3$, $\alpha^{(2)}=10^3$, $\Delta_\tau=10^{-5}$, 
      and precision tolerance $tol=0.001$, we notice that the background movement appears almost solely in $\vi{2}$ and the 
      global movement of the cube appears in $\vi{1}$. In Figure \ref{fig:boxBO} we represent only flow vectors with magnitude 
      larger than $0.05$ and omit in $\vi{2}$ the part in common with $\vi{1}$ for better visibility.
 	\begin{figure}[htp] \label{fig:boxBO} 
	\begin{center}
	\includegraphics[width=0.45\textwidth]{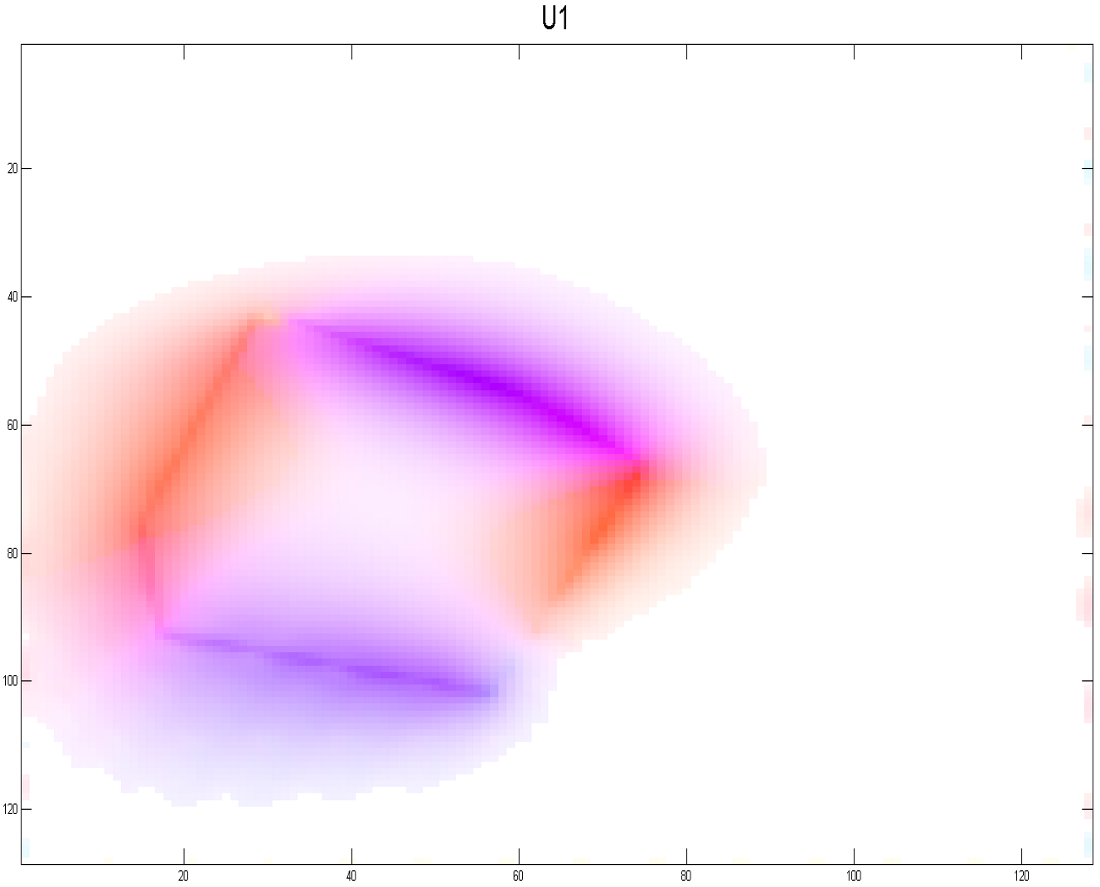} \quad 
        \includegraphics[width=0.45\textwidth]{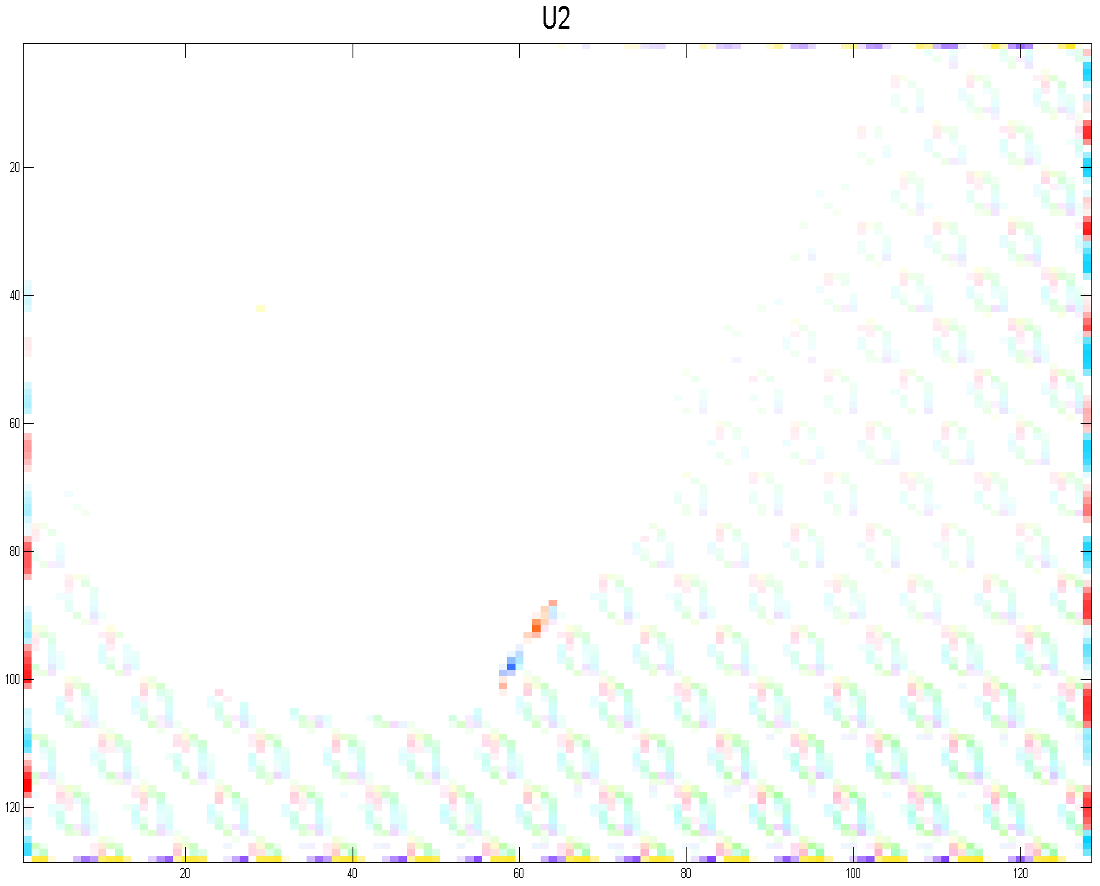}  
        \end{center}
        \caption{The dynamic sequence consists of the smooth (translation like) motion of a cube and an oscillating background.
                 The oscillation has a periodicity of four frames and takes place along the diagonal direction from the bottom 
                 left to the top right, moving at a rate of 5\% of the frame size in each frame. The proposed model decomposes 
                 the motion, obtaining the global movement of the cube in $\vi{1}$ (left) and the background movement exclusively in 
                 $\vi{2}$ (right).}
	\end{figure}	

\item In the third experiment the original movie consists of thirty-two frames and can be viewed together with the 
      decomposition result on the web-page 
      \url{http://www.csc.univie.ac.at/index.php?page=visualattention}.
      The flow is decomposed into two components.
      The first part shows the movement of a Ferris wheel and people walking. 
      The second part shows blinking lights and the reflection of the wheel.
      The flow visualized in Figure \ref{fig:giostra} is the one between the 4th and the 5th frame of the sequence with a parameter 
      setting $\alpha^{(1)}=1$, $\alpha^{(2)}=\frac{1}{4}$. In order to improve the visibility we represent only flow
      vectors with magnitude larger than $0.18$ and we omit for $\vi{2}$ the part in common with $\vi{1}$.
      \begin{figure}[tp]
	\begin{center}
         \includegraphics[width=0.45\textwidth]{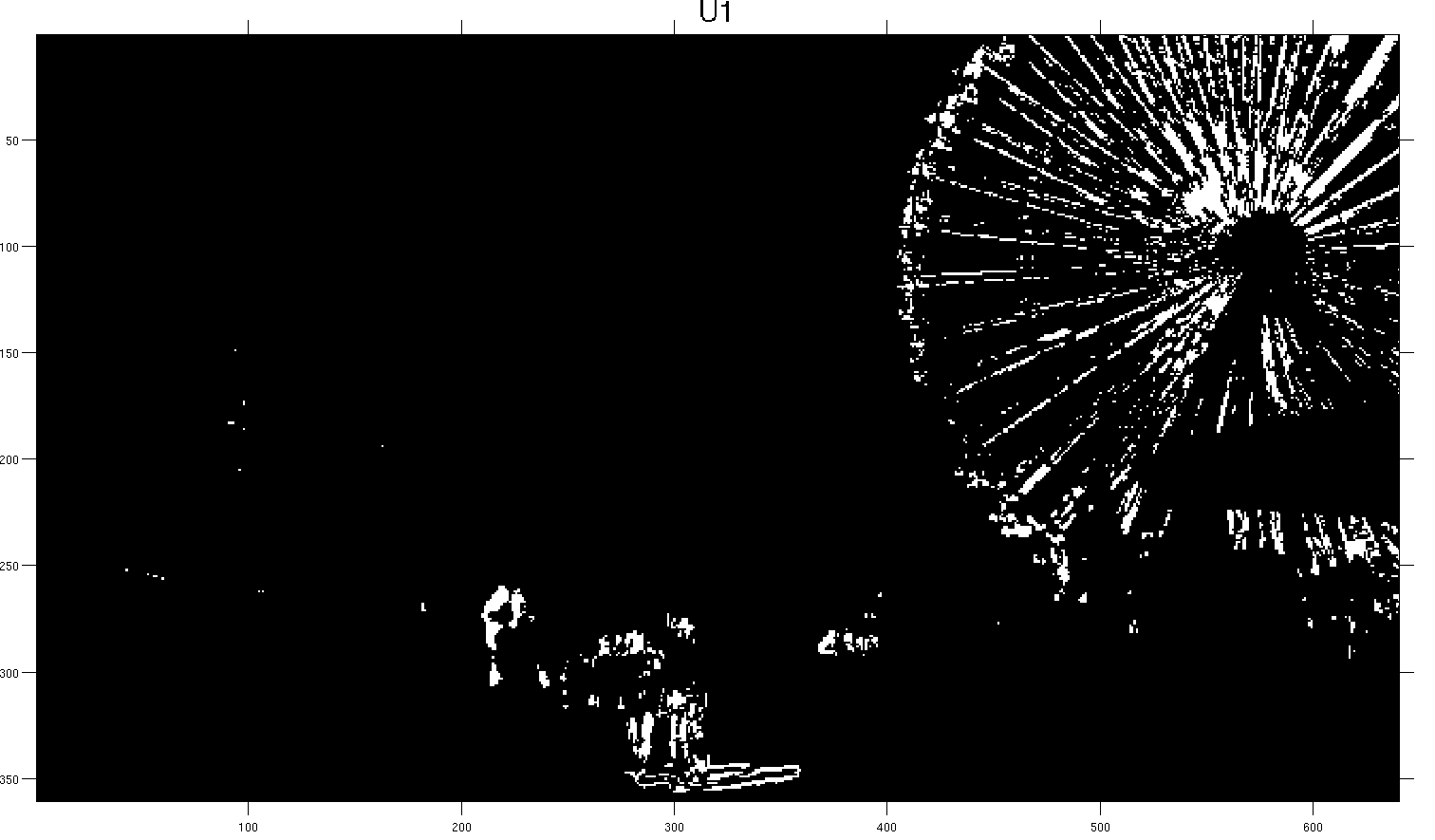} 
         \includegraphics[width=0.45\textwidth]{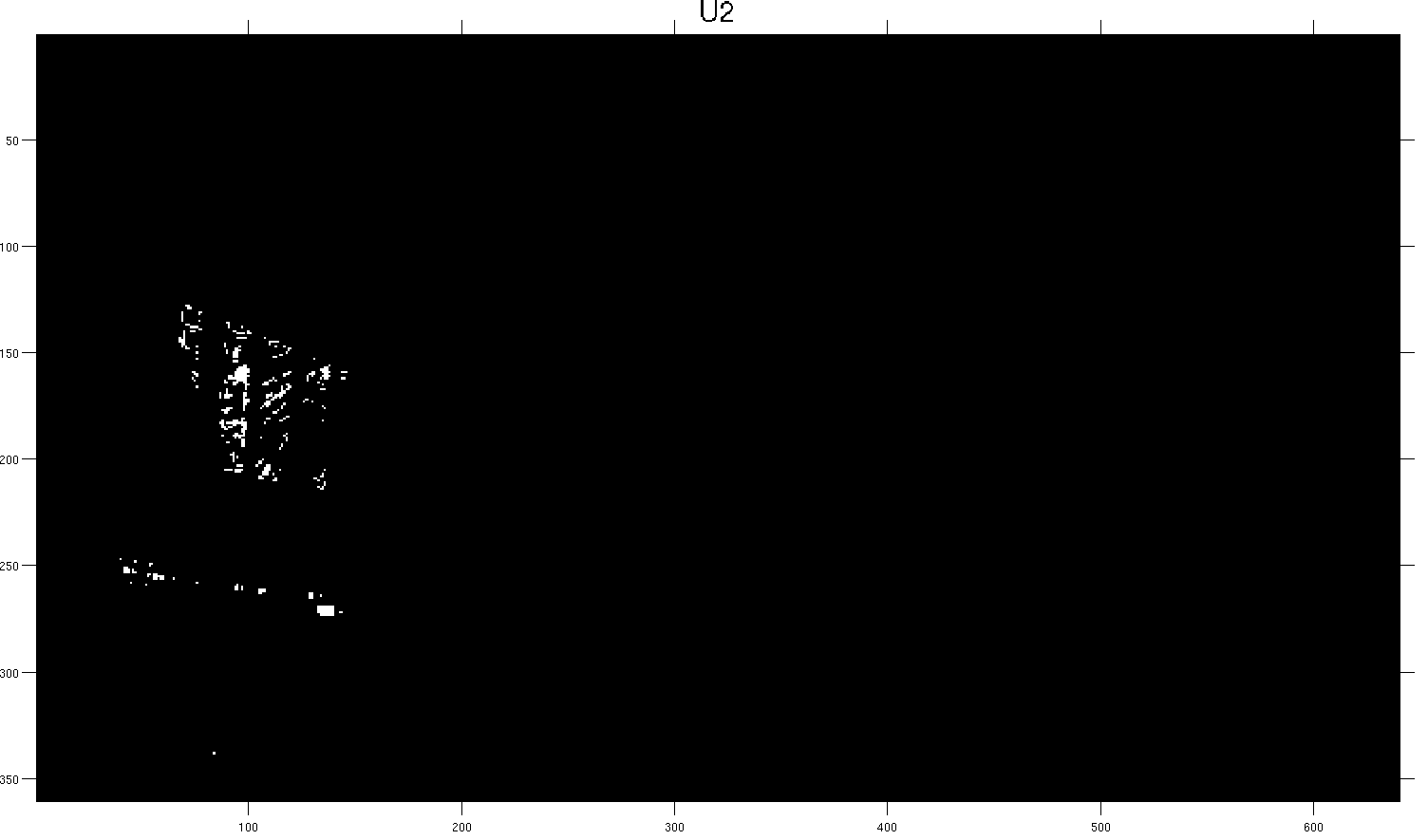} \\
	 \includegraphics[width=0.45\textwidth]{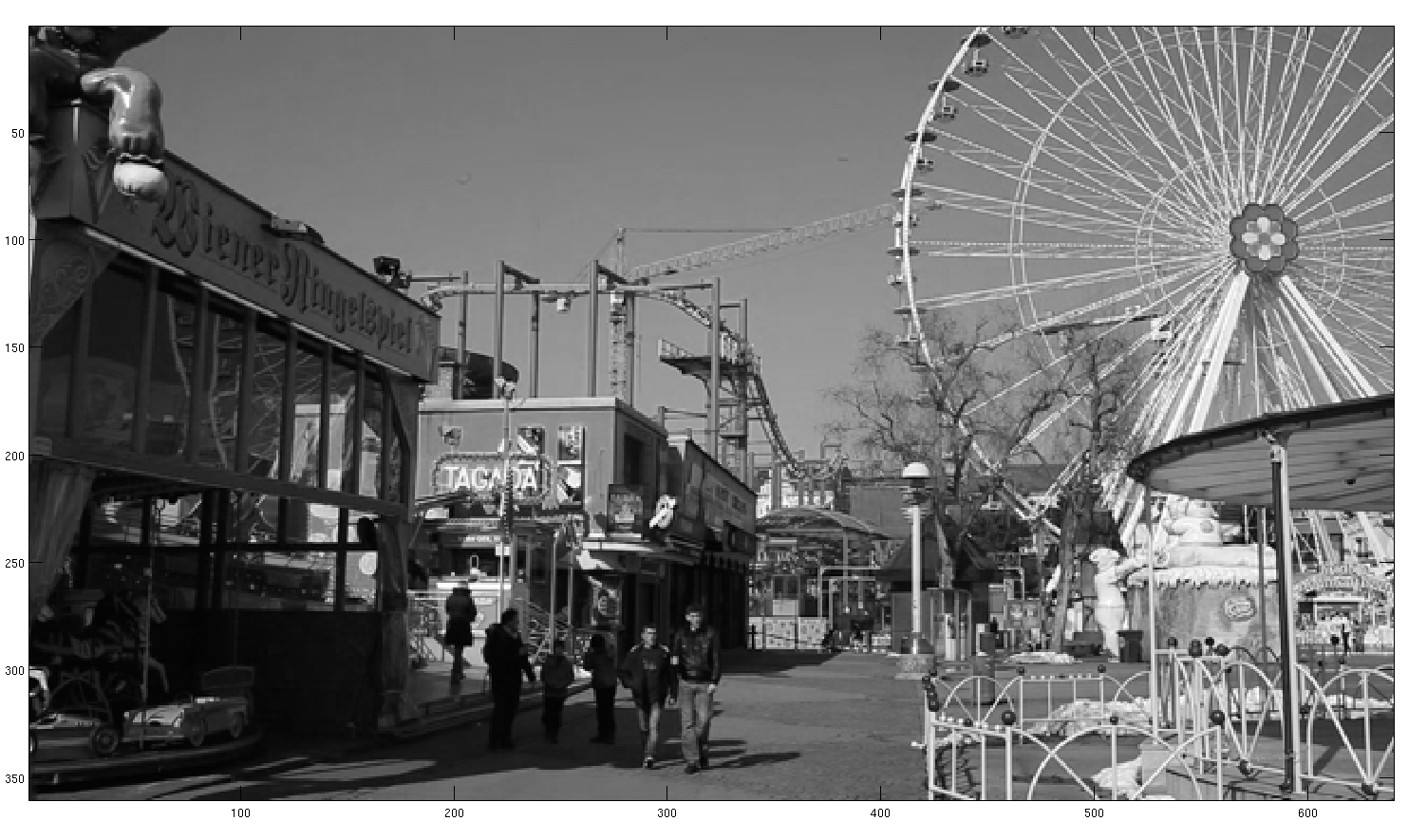}   
        \end{center}
         \caption{$\vi{1}$: Movement of a Ferris wheel and people walking in the foreground (top left). 
         $\vi{2}$ consists of blinking lights and the reflections of the wheel (top right). The third image (bottom) is 
         a reference frame.} \label{fig:giostra}
	\end{figure} 
\item The fourth example is flickering.
      In a standard flickering experiment, the difference in human attention is investigated by inclusion of blank 
      images in a repetitive image sequence. Although, in general, these blank images are not deliberately recognized, 
      they change the awareness of the test persons. J. K. O'Regan \cite{Ore07} states that
      ``{\bf Change blindness is a phenomenon in which a very large change in a picture will not be seen by a viewer, 
      if the change is accompanied by a visual disturbance that prevents attention from going to the change location}''. 
      They test data from \url{http://nivea.psycho.univ-paris5.fr}, was used for our simulations.       
      The proposed optical flow decomposition is able to detect regions, which also humans can recognize, but standard 
      optical flow algorithms fail to: To show this,  
      the input sequence is composed by four frames consisting of Frame $1$, a blank image, Frame $2$ and again an 
      identical blank image (see Figure \ref{fig:fliBox} (top)). 
      This sequence is then aligned periodically to a movie. We interpret the movie as a linear interpolation between 
      the frames. We test and compare Horn-Schunck, Weickert-Schn\"orr and the proposed algorithm. 
 \begin{figure}[htp]
 \begin{center}
\includegraphics[scale=0.24]{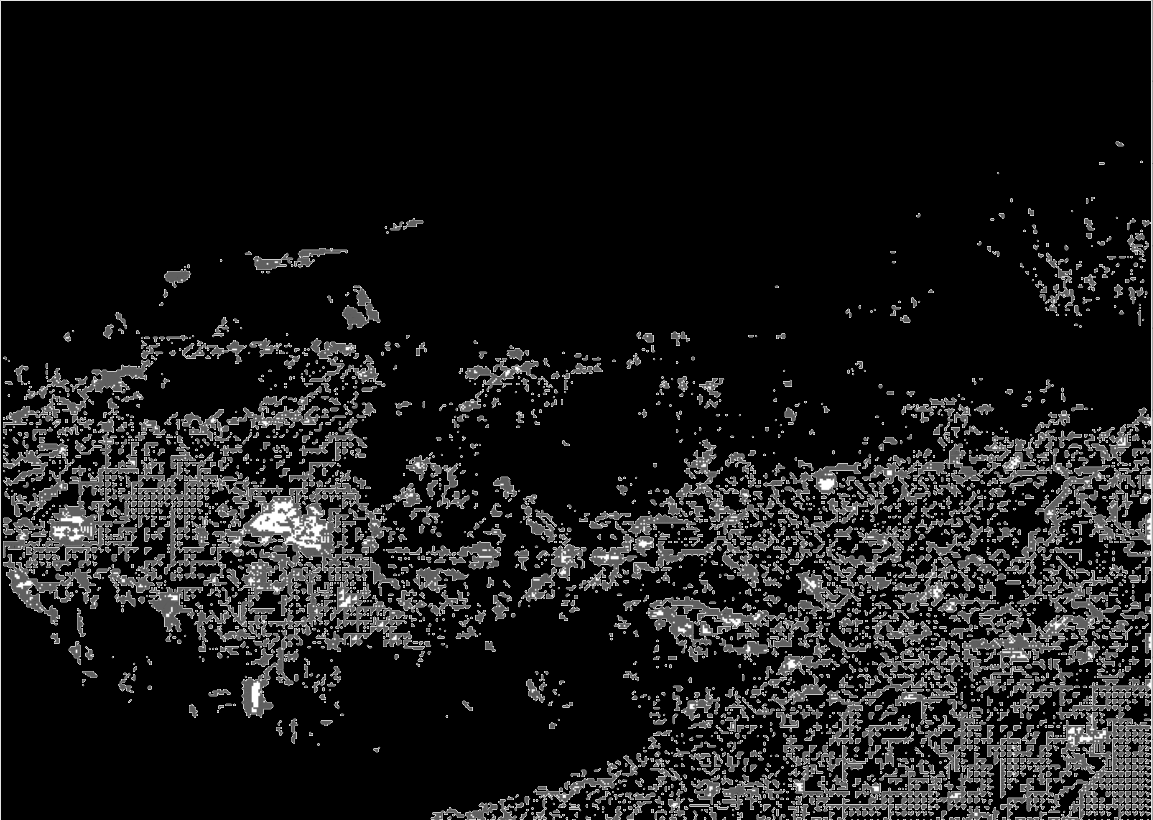}
\end{center}
\caption{Result with Horn-Schunck.}
\label{fig:resultFlicker}
\end{figure}

We set the smoothness parameter $\alpha^{(1)}$ 
to a value of one for all the methods.  Moreover, for our approach we set $\alpha^{(2)}=1$.
For Horn-Schunck we visualize the flow field in Figure \ref{fig:resultFlicker}. This flow is the one between the blank frame and the 
slightly changed frame,  which exceeds a threshold of $3.9$.
The results obtained by applying Weickert-Schn\"orr and the $\vi{1}$ field of our approach, respectively, are small in magnitude. 
Therefore, we do not visualize them.  
This behavior is coherent with the motivation of the Weickert-Schn\"orr method to produce an optical flow that is less 
sensitive to variations over space and time.
Finally, we visualize in Figure \ref{fig:fliBox} (down right) 
the $\vi{2}$ flow field for the proposed approach. For the visualization we omit all
vectors with magnitude lower than $0.18$.
In order to make transparent the result, we show in Figure \ref{fig:fliBox} (down left) the difference between 
the two frames of the sequence containing information (see Figure \ref{fig:fliBox} (top)).
\begin{figure}[htp]
\begin{center}
\includegraphics[scale=0.2]{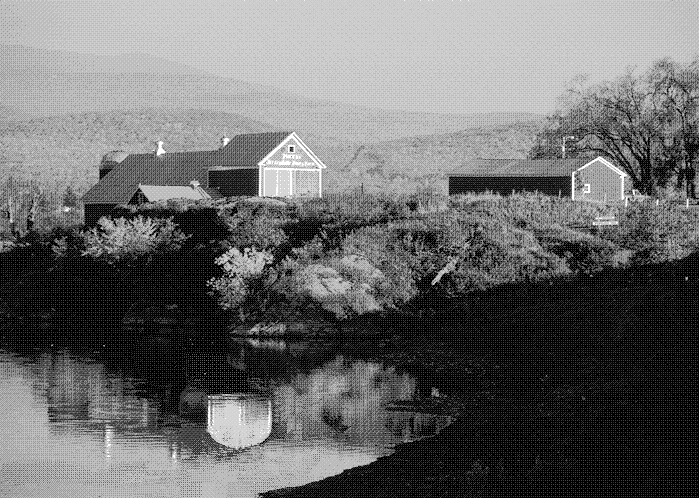}\quad
\includegraphics[scale=0.20]{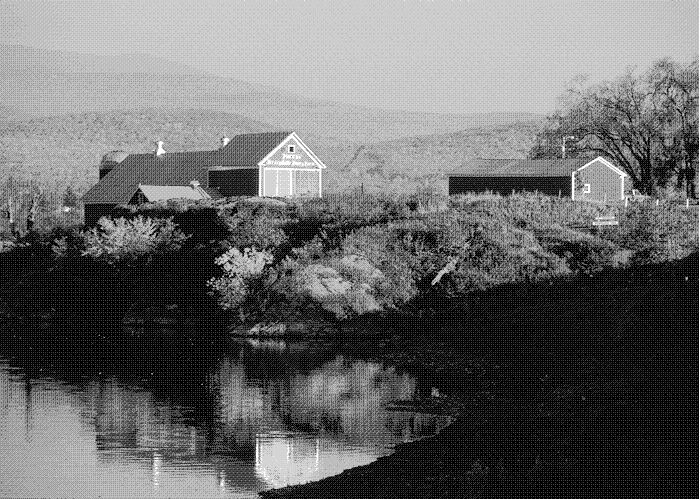}\\
\vspace{0.1cm}
\includegraphics[scale=0.418]{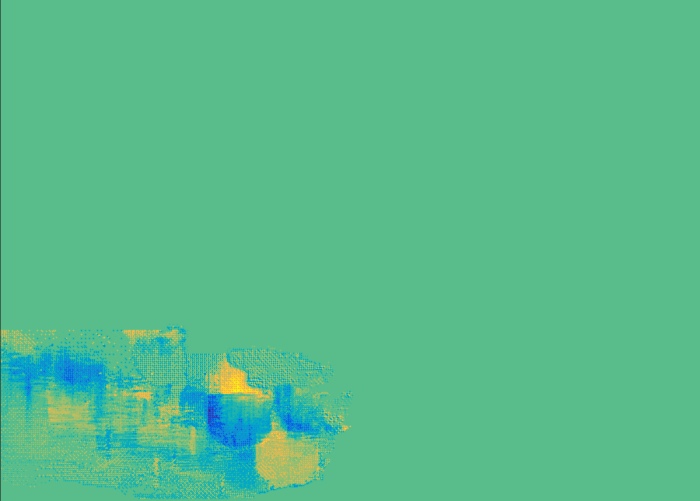}\quad
\includegraphics[scale=0.201]{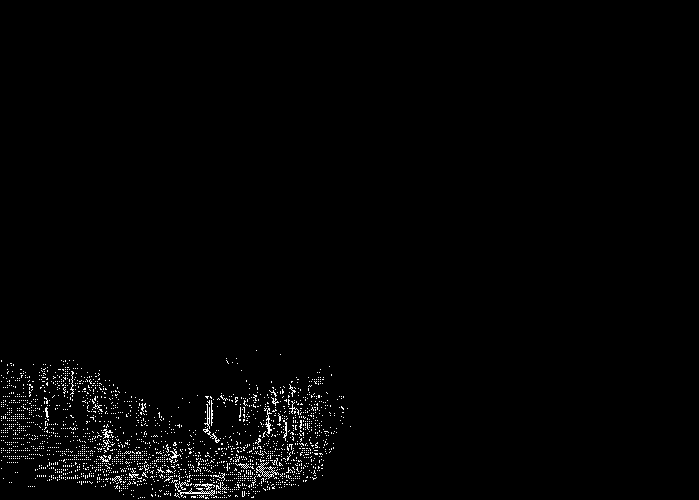}
\end{center}
\caption{The two frames of the flickering sequence containing information (top), the difference between these two 
frames (down left), and the $\vi{2}$ flow field resulting from the proposed approach (down right). 
As predicted in Section \ref{sec:anEx} and Appendix \ref{app:proof} the $\vi{1}$ component is negligible, 
instead $\vi{2}$ detects the change of intensity across the blank sheet.
}
\label{fig:fliBox}
\end{figure}
In this experiment, we notice that the $\vi{1}$ component is negligible, instead $\vi{2}$ detects the areas affected 
by change of intensities (see Figure \ref{fig:fliBox} (down right)).
\end{itemize}

\subsection*{Additional Information}
In the following, we show the capacity of our model
to extract different information compared to standard optical flow algorithms.
The current literature focuses on average angular and endpoint error \cite{BakSchaLewRotBla11} 
in order to compare optical flow algorithms.
Our model extracts information, that is neglected by standard algorithms. 
Such difference can be shown through a quantitative comparison of models.
For this purpose, we use well-known test sequences from the Middlebury database \url{http://vision.middlebury.edu/flow/}, 
and evaluate the residual of the {\bf optical flow constraint}. 

In order to understand how much information our method is capable to extract
from an entire dynamic sequence, we calculate the residual squared over space and time:
$\mathcal{E}(\vi{1},\vi{2})$ as in \eqref{eq:E} and compare it with the squared residual over space and time of 
the Weickert-Schn\"orr method \cite{WeiSchn01a,WeiSchn01b}. 
We use the parameter settings $\alpha^{(1)}=100$ ($\alpha = \alpha^{(1)}$ in Weickert-Schn\"orr)
and  $\alpha^{(2)}=\frac{1}{4}$, tolerance $tol=0.01$, in order to have a good comparison of the two methods.
Again the residual is smaller for the proposed method as shown in Table \ref{tab:tabellaRG}.
\begin{table}[bt]
\centering
\begin{tabular}{|l|c|c|r|}
\hline
 & Weickert-Schn\"orr & Proposed model \\
\hline
Hamburg Taxi & 1374.9 & 1021 \\
\hline
RubberWhale & 4459.7 & 3046.8\\
\hline
Hydrangea & 8533.3 & 7647.2 \\
\hline
DogDance  & 9995.4 &  8217.6 \\
\hline
Walking  & 8077.5 &  5944.3 \\
\hline
\end{tabular}
\vspace*{0.1cm}
\caption{Comparison of squared residuals over space and
time $\mathcal{E}$ between Weickert-Schn\"orr and the proposed method.}
\label{tab:tabellaRG}
\end{table}

The smaller value of the residual are due to the fact that we are calculating a minimizer from a larger space of optical flow 
components. 
However, with this approach we cannot observe oversmoothing of the flow.

\section{Conclusion}\label{sec:conclusion}
We present a new optical flow model for decomposing the flow in spatial and temporal components of different scales.
A main ingredient of our work is a new variational formulation of the optical flow equation. Finally, applications (some of them 
from psychological experiments) are considered and analyzed analytically and computationally.

\section*{Acknowledgment}
This work is carried out within the project 
{\bf Modeling Visual Attention as a Key Factor in Visual Recognition and Quality of Experience} funded by the 
Wiener Wissenschafts und Technologie Fonds - WWTF.
OS is also supported by the Austrian Science Fund - FWF, Project S11704 with the national research network, 
NFN, Geometry and Simulation.
The authors would like to thank U.~Ansorge, C.~Valuch, S.~Buchinger, C. Kirisits, P. Elbau, L. Lang, A. Beigl, T. Widlak 
for interesting discussions on the optical flow and J.~ A.~ Iglesias for discussions and the creation of
the cube sequence.

\begin{appendix}
\section{Optical flow decomposition in 1D}\label{app:proof}
In order to make transparent the features of our decomposition we study exemplary 
the 1D case again. From regularization theory (see e.g. \cite{SchGraGroHalLen09}) 
we know that the minimizers $(\vie{1}_{\vec{\alpha}}$, $\vie{2}_{\vec{\alpha}})$, 
for $\vec{\alpha} = (\alpha^{(1)},\alpha^{(2)}) \to 0$, are converging to 
a solution of the optical flow equation which minimizes 
\begin{equation*}
 \mathcal{R} = \Reg{1}+\alpha \Reg{2} \text{ for } \alpha = \lim_{\vec{\alpha} \to 0} \frac{\alpha^{(2)}}{\alpha^{(1)}} > 0\;.
\end{equation*}
Such a solution is called $\mathcal{R}$ minimizing solution.
Note that by the 1D simplification the modules $u^{(i)}$, $i=1,2$, are real valued functions.

We calculate the decomposition for the optical flow equation \eqref{eq:ofe_1}, for 
the specific test data \eqref{eq:specific}. The regularized solutions approximate the $\mathcal{R}$ minimizing solution, 
and thus these calculations can be viewed representative for the properties of the minimizer of the regularization method.
For these particular kind of data the solution of the optical flow equation is given by :
\begin{equation}
\label{eq:of_1d}
u(x,t) = - \frac{\tilde{f}(x)}{\partial_x \tilde{f}(x)} \frac{\partial_t g(t)}{g(t)} = 
- \frac{\partial_t (\log g)(t)}{\partial_x (\log \tilde{f})(x)} \;.
\end{equation}
Let us assume that $(\log g)(t)-(\log g)(0)$ can be expanded into a Fourier $\sin$-series:
\begin{equation} 
\label{eq:Hseries}
(\log g)(t)-(\log g)(0)= \int_0^t \partial_t (\log g)(\tau)\,\d \tau =\sum_{n=1}^\infty \hat{g}_n \sin(n \pi t) .
\end{equation}
Moreover, we assume that $1/\partial_x (\log \tilde{f})(x)$ can be expanded in a $\cos$-series:
\begin{equation}
 \label{eq:gseries}
 \frac{1}{\partial_x (\log \tilde{f})(x)} = \sum_{m=0}^\infty f_m \cos(m \pi x) .
\end{equation}
Then
\begin{equation}
\label{eq:Gf}
 - \frac{(\log g)(t)-(\log g)(0)}{\partial_x (\log \tilde{f})(x)} = \widehat{u}(x,t) =
    \widehat{u}^{(1)}(x,t) + \widehat{u}^{(2)}(x,t).
\end{equation}
Inserting this identity in the regularization functional $
\mathcal{R} (\vie{1},\vie{2})=\int 
(\partial_x \vie{1})^2+(\partial_t \vie{1})^2 
+\alpha\left(\Vi{2}\right)^2 \d x \d t\,,$
we remove the $\vie{2}$ dependence, and we get
\begin{equation*}
\mathcal{R} (\Vi{1})
:= \int_{(0,1) \times (0,1)}\!\!\!\!\!\!\! (\partial_{xt} \Vi{1})^2+(\partial_{tt} \Vi{1})^2 
   + \alpha \left(\frac{(\log g)(t)-(\log g)(0)}{\partial_x (\log \tilde{f})(x)} + \Vi{1} \right)^2 \d x \d t,
\end{equation*}
where we enforce the following boundary conditions on $\Vi{1}$: From the definition of $\Vi{1}$ it follows that 
$\Vi{1}(x,0)=0.$
Secondly, we enforce $\Vi{1}(x,1)= 0$.
In fact, the assumption is reasonable, because when the series $\sum_{n=0}^\infty \hat{g}_n$ is absolutely 
convergent, $(\log g)(t)-(\log g)(0)=0$ (cf. \eqref{eq:Hseries}), which implies that $\Vi{1}(x,1)+\Vi{2}(x,1)=0$ 
(cf. \eqref{eq:Hseries}).

By substituting the relation between $\Vi{2}$ and $\Vi{1}$ we reduce the constraint optimization problem to an 
unconstrained optimization problem for $\Vi{1}$, and the minimizer solves the partial differential equation
\begin{equation*}
 \partial_{ttxx} \Vi{1} +\partial_{tttt} \Vi{1} + \alpha 
\left(\frac{(\log g)(t)-(\log g)(0)}{\partial_x (\log \tilde{f})(x)} + \Vi{1} \right) =0 \text{ in } (0,1) \times (0,1),
\end{equation*}
together with the boundary conditions:
\begin{equation}
\label{eq:u1}
 \begin{aligned}
    \partial_{tt} \Vi{1} = \Vi{1} &= 0 \text{ on } (0,1) \times \set{0,1},\\
  \partial_x \partial_{tt} \Vi{1} &= 0 \text{ on } \set{0,1} \times (0,1). 
 \end{aligned}
\end{equation}

The boundary conditions $\partial_{tt} \Vi{1} = = 0 \text{ on } (0,1) \times \set{0,1},$ and 
$\partial_x \partial_{tt} \Vi{1} = 0 \text{ on } \set{0,1} \times (0,1)$ appear as natural boundary conditions, when weak solutions 
are considered.

Now, we substitute 
$\hat{w} := \partial_{tt} \Vi{1},$
and we get the following system of equations
\begin{equation}
\label{eq:w}
\begin{aligned}
  \partial_{xx} \hat{w} +\partial_{tt} \hat{w} &= - \alpha \left(\frac{(\log g)(t)-(\log g)(0)}{\partial_x (\log \tilde{f})(x)}+\Vi{1} \right) \text{ in } (0,1) \times (0,1),\\
  \hat{w} &= 0 \text{ on } (0,1) \times \set{0,1},\\
  \partial_x \hat{w} &= 0 \text{ on } \set{0,1} \times (0,1).
\end{aligned}
\end{equation}
and 
\begin{equation*} 
\begin{aligned}
\Vi{1}(x,t) &= \int_0^t \int_0^\tau \hat{w}(x,\hat{\tau})\d \hat{\tau} \d \tau - t \int_0^1 \int_0^\tau \hat{w}(x,\hat{\tau})\d \hat{\tau} \d \tau .
\end{aligned}
\end{equation*}
$\hat{w}$ can be expanded as follows:
\begin{equation*}
 \hat{w}(x,t) = \sum_{m,n=0}^\infty \hat{w}_{mn} \cos (m \pi x) \sin(n \pi t),
\end{equation*}
and we expand $\Vi{1}$ in an analogous manner: 
\begin{equation*}
  \Vi{1}(x,t) = \sum_{m,n=0}^\infty \Vi{1}_{mn} \cos (m \pi x) \sin(n \pi t)\;,
\end{equation*}
such that 
\begin{equation}
\label{eq:system1b}
\hat{w}_{mn} = - n^2 \pi^2 \Vi{1}_{mn}, \quad \forall\, m,n \in \N_0.
\end{equation} 

Thus it follows from \eqref{eq:w} that 
\begin{equation}
\begin{aligned}
\label{eq:system1}
\hat{w}_{mn} (m^2 + n^2)\pi^2  =  \alpha\left(\Vi{1}_{mn} + f_m \hat{g}_n\right), \quad \forall\, m,n \in \N_0.
\end{aligned}
\end{equation}

\eqref{eq:system1b} and \eqref{eq:system1} imply that 
\begin{equation}
\begin{aligned}
\label{eq:system2}
\Vi{1}_{mn} = - \frac{\alpha}{\alpha + \pi^4 (m^2 + n^2) n^2} f_m \hat{g}_n, \quad \forall\, m,n \in \N_0.
\end{aligned}
\end{equation}

Now, consider a specific test example $g(t) = \exp\set{\frac{\sin(n_0\pi t)}{n_0 \pi}}$ for some $n_0 \in \N$.
Then, from \eqref{eq:of_1d} it follows that 
$u(x,t) = - \frac{\cos(n_0 \pi t)}{\partial_x (\log \tilde{f})(x)},$
and correspondingly we have 
\begin{equation*}
(\log g)(t)-(\log g)(0) = \frac{\sin (n_0 \pi t)}{n_0 \pi} = \sum_{n=1}^\infty  \frac{\delta_{n n_0}}{n_0 \pi}
\sin (n \pi t).
\end{equation*}
In this case it follows from \eqref{eq:system2} that 
\begin{equation*}
 \Vi{1}_{mn}  = -\frac{\alpha}{\alpha + \pi^4 (m^2 + n_0^2)n_0^2} \frac{\delta_{n n_0}}{n_0 \pi} f_m .
\end{equation*}
In the case of flickering data $f$, $\vie{2}$ is pronounced (if $n_0$ is large $\Vi{1}_{mn} \approx 0)$ 
while in the quasi-static case $\vie{1}$ is dominant.
Moreover, we also see that spatial components belonging to Fourier-$\cos$ coefficients with large $m$ 
are more pronounced in the $\vie{2}$ component, and the spatial and temporal coefficients always appear in both components.
In particular this means that a threshold has to be set, to assign them to the first or second module.
\end{appendix}



\begin{thebibliography}{99}
 \providecommand{\natexlab}[1]{#1}
 \providecommand{\url}[1]{{#1}}
 \providecommand{\urlprefix}{URL }
 \expandafter\ifx\csname urlstyle\endcsname\relax
   \providecommand{\doi}[1]{DOI~\discretionary{}{}{}#1}\else
   \providecommand{\doi}{DOI~\discretionary{}{}{}\begingroup
   \urlstyle{rm}\Url}\fi
 \providecommand{\eprint}[2][]{\url{#2}}
 
 \bibitem{AbhBelSch09}
 \newblock Abhau J, Belhachmi Z, Scherzer O  
 \newblock On a decomposition model for optical
   flow. 
   \newblock In: Energy Minimization Methods in Computer Vision and Pattern
   Recognition, Lecture Notes in Computer Science, vol 5681, Springer-Verlag,
   Berlin, Heidelberg, 2009, pp 126--139, \doi{10.1007/978-3-642-03641-5_10},
   \urlprefix\url{http://dx.doi.org/10.1007/978-3-642-03641-5_10}
 
 \bibitem{AndSchZul15}
 \newblock Andreev R, Scherzer O and Zulehner W  
 \newblock Simultaneous optical flow and source estimation:
 space-time discretization and preconditioning. 
 \newblock Applied Numerical Mathematics 96, 2015, 72-81


 
 \bibitem{AubAuj05}
 \newblock Aubert G, Aujol JF  
 \newblock Modeling very oscillating signals. {A}pplication to
   image processing. 
   \newblock Appl Math Optim 51(2), 2005, 163--182
 
 \bibitem{AujCha05}
 \newblock Aujol JF, Chambolle A  
 \newblock Dual norms and image decomposition models.
 \newblock Int J Comput Vision 63(1), 2005, 85--104
 
 \bibitem{AujKan06}
 \newblock Aujol JF, Kang S 
 \newblock Color image decomposition and restoration.
 \newblock J Vis Commun
   Image Represent 17(4), 2006, 916--928\doi{10.1016/j.jvcir.2005.02.001},
 \urlprefix\url{http://www.sciencedirect.com/science/article/B6WMK-4FTWJGT-1/2/ffd97bf7a31e0b9cfdc8af0895369e84}
 
 \bibitem{AujAubBlaCha05}
 \newblock Aujol JF, Aubert G, Blanc-F{\'e}raud L, Chambolle A  
 \newblock Image decomposition
   into a bounded variation component and an oscillating component.
   \newblock J Math
   Imaging Vision 22(1), 2005, 71--88
 
 \bibitem{AujGilChaOsh06}
 \newblock Aujol JF, Gilboa G, Chan T, Osher S  
 \newblock Structure-texture image
   decomposition---modeling, algorithms, and parameter selection.
   \newblock Int J Comput
   Vision 67(1), 2006, 111--136
 
 \bibitem{BakSchaLewRotBla11}
 \newblock Baker S, Scharstein D, Lewis JP, Roth S, Black MJ, Szeliski R 
 \newblock A database
   and evaluation methodology for optical flow. 
   \newblock Int J Comput Vision 92(1), 2011, 1--31,
   \doi{10.1007/s11263-010-0390-2},
   \urlprefix\url{http://www.springerlink.com/index/10.1007/s11263-010-0390-2}
 
 \bibitem{BauBruMic14}
 \newblock Bauer M, Bruveris M,Michor W.P.  
 \newblock Overview of the geometries of shape spaces and diffeomorphism
               groups.
               \newblock J Math Imaging Vision 50(1-2), 2014, 60--97,
   \doi{10.1007/s10851-013-0490-z},
   \urlprefix\url{http://dx.doi.org/10.1007/s10851-013-0490-z}
 
 \bibitem{BerEffRum15}
 \newblock Berkels B, Effland A and Rumpf M  
 \newblock Time discrete geodesic paths in the space of images.
 \newblock ArXiv 2015
 \urlprefix\url{http://arxiv.org/abs/1503.02001}
 
 
 \bibitem{BorItoKun03}
 \newblock Borzi A, Ito K, and Kunisch K 
 \newblock Optimal control formulation for determining optical flow.
 \newblock SIAM J Sci Comput 24(3), 2002, 818--847
 
 \bibitem{Bru06}
 \newblock Bruhn A 
 \newblock Variational optic flow computation: Accurate modeling and efficient numerics.
 \newblock PhD thesis 2006, Saarland University, Germany
 
 \bibitem{DuvAujVes10}
 \newblock Duval V, Aujol JF, Vese L 
 \newblock Mathematical modeling of textures: Application
   to color image decomposition with a projected gradient algorithm.
   \newblock J Math
   Imaging Vision 37, 2010, 232--248
 
 \bibitem{HanSch01}
 \newblock Hanke M, Scherzer O 
 \newblock Inverse problems light: numerical differentiation.
 \newblock Amer. Math. Monthly 108(6), 2001, 512--521
 \doi{10.2307/2695705},
 \urlprefix\url{http://dx.doi.org/10.2307/2695705}
 
 
 \bibitem{HorSchu81}
 \newblock Horn BKP, Schunck BG
 \newblock Determining optical flow.
 \newblock Artificial Intelligence
   17, 1981, 185--203
 
 \bibitem{JaiYou13}
 \newblock Jain A, Younes L 
 \newblock A kernel class allowing for fast computations in shape spaces induced by diffeomorphisms.
 \newblock J Comput and Applied Math 245, 2013, 162--181
 \doi{10.1016/j.cam.2012.10.019},
 \urlprefix\url{http://dx.doi.org/10.1016/j.cam.2012.10.019}
 
 \bibitem{KirLanSch14}
 \newblock Kirisits C, Lang L, Scherzer O 
 \newblock Decomposition of optical flow on the sphere.
   \newblock GEM-Int J on Geomathematics 5(1), 2014, 117--141,
   \doi{10.1007/s13137-013-0055-8},
   \urlprefix\url{http://dx.doi.org/10.1007/s13137-013-0055-8}
 
 
 \bibitem{KohMemSchn03}
 \newblock Kohlberger T, Memin E, Schn{\"o}rr C 
 \newblock Variational dense motion estimation
   using the helmholtz decomposition.
   \newblock In: Griffin LD, Lillholm M (eds) Scale
   Space Methods in Computer Vision, Lecture Notes in Computer Science, vol
   2695, Springer, Berlin, 2003, 432--448, \doi{10.1007/3-540-44935-3_30},
   \urlprefix\url{http://dx.doi.org/10.1007/3-540-44935-3_30}
 
 \bibitem{McCNovCraGal01}
 \newblock McCane B, Novins K, Crannitch D, Galvin B 
 \newblock On benchmarking optical flow.
 \newblock  Comput Vision Image Understanding 84, 2001, 126--143
 
 \bibitem{Mey01}
 \newblock Meyer Y 
 \newblock Oscillating patterns in image processing and nonlinear evolution
   equations, University Lecture Series, vol~22, 2001.
   \newblock American Mathematical Society,
   Providence, RI, the fifteenth Dean Jacqueline B. Lewis memorial lectures
 
 \bibitem{MilYou01}
 \newblock Miller M I, Younes L 
 \newblock Group actions, homeomorphisms, and matching: A general framework
 \newblock Int J Comput Vision 41(1/2), 2001, 61--84,
 
 \bibitem{Ore07}
 \newblock O'Regan JK 
 \newblock Change blindness.
 \newblock E Cognitive science, 2007
 
 \bibitem{SchGraGroHalLen09}
 \newblock Scherzer O, Grasmair M, Grossauer H, Haltmeier M, Lenzen F  
 \newblock Variational
   methods in imaging, 
   \newblock Applied Mathematical Sciences, vol 167, 2009. Springer, New
   York, \doi{10.1007/978-0-387-69277-7},
   \urlprefix\url{http://dx.doi.org/10.1007/978-0-387-69277-7}
 
 
 \bibitem{VesOsh03}
 \newblock Vese L, Osher S  
 \newblock Modeling textures with total variation minimization and
   oscillating patterns in image processing.
   \newblock J Sci Comput 19(1--3), 2003, 553--572,
   special issue in honor of the sixtieth birthday of Stanley Osher
 
 \bibitem{VesOsh04}
 \newblock Vese L, Osher S 
 \newblock Image denoising and decomposition with total variation
   minimization and oscillatory functions.
   \newblock J Math Imaging Vision 20, 2004, 7--18
 
 \bibitem{WanFanWan08}
 \newblock Wang CM, Fan KC, Wang CT
 \newblock Estimating Optical Flow by Integrating
   Multi-Frame Information.
   \newblock Journal of Information Science and Engineering
   24, 2008, 1719--1731
 
 \bibitem{WeiBruBroPap06}
 \newblock Weickert J, Bruhn A, Brox T, Papenberg N  
 \newblock A Survey on Variational Optic Flow Methods for Small Displacements.
 \newblock Mathematical Models for Registration and Applications to Medical Imaging, Springer Berlin Heidelberg 10, 2006, 103--136
  \urlprefix\url{http://dx.doi.org/10.1007/978-3-540-34767-5_5}
 
 \bibitem{WeiSchn01a}
 \newblock Weickert J, Schn{\"o}rr C 
 \newblock A theoretical framework for
   convex regularizers in {PDE}-based computation of image motion.
   \newblock Int J Comput
   Vision 45(3), 2001{\natexlab{a}}, 245--264
 
 \bibitem{WeiSchn01b}
 \newblock Weickert J, Schn{\"o}rr C 
 \newblock Variational optic flow
   computation with a spatio-temporal smoothness constraint.
   \newblock J Math Imaging
   Vision 14, 2001{\natexlab{b}}, 245--255
 
 \bibitem{YuaSchnSte07}
 \newblock Yuan J, Sch{\"o}rr C, Steidl G 
 \newblock Simultaneous higher-order optical flow
   estimation and decomposition.
   \newblock SIAM J Sci and Stat Comput 29(6), 2007, 2283--2304
   (electronic), \doi{10.1137/060660709},
   \urlprefix\url{http://dx.doi.org/10.1137/060660709}
 
 \bibitem{YuaSteSchn08}
 \newblock Yuan J, Steidl G, Schn{\"o}rr C 
 \newblock Convex {H}odge decomposition of image
   flows.
   \newblock In: Pattern recognition, Lecture Notes in Comput. Sci., vol 5096,
   Springer, Berlin, 2008, 416--425, \doi{10.1007/978-3-540-69321-5_42},
   \urlprefix\url{http://dx.doi.org/10.1007/978-3-540-69321-5_42}
 
 \bibitem{YuaSchnSte09}
 \newblock Yuan J, Schn{\"o}rr C, Steidl G 
 \newblock Convex {H}odge decomposition and
   regularization of image flows.
   \newblock J Math Imaging Vision 33(2), 2009, 169--177,
   \doi{10.1007/s10851-008-0122-1},
   \urlprefix\url{http://dx.doi.org/10.1007/s10851-008-0122-1}
 
\bibitem{LeeWri10}
\newblock Lee, Jungpyo and Wright, John C 
\newblock A versatile parallel block-tridiagonal solver for spectral codes.
  \newblock MIT Plasma Science \& Fusion Center 2010
 

 
  
 \end{thebibliography}
\end{document}